\documentclass{article} %
\usepackage{iclr2026_conference,times}

\usepackage{amsmath,amsfonts,bm}

\def\eqref#1{equation~\ref{#1}}

\def\1{\bm{1}}

\DeclareMathAlphabet{\mathsfit}{\encodingdefault}{\sfdefault}{m}{sl}
\SetMathAlphabet{\mathsfit}{bold}{\encodingdefault}{\sfdefault}{bx}{n}

\usepackage{hyperref}
\usepackage{url}
\usepackage{graphicx} 
\usepackage{natbib}  
\usepackage{caption} 
\usepackage{algorithm}
\usepackage{algorithmic}
\usepackage{amsthm}
\usepackage{bm}
\usepackage{newfloat}
\usepackage{listings}
\usepackage{amsmath}
\usepackage{amssymb}

\newtheorem{assumption}{Assumption}
\newtheorem{prop}{Proposition}
 \usepackage{booktabs}
 \usepackage{multirow}
\usepackage[table,x11names]{xcolor}
\usepackage{wrapfig}
\usepackage[draft]{changes}
\usepackage{enumitem}

\newcommand{\ours}{\texttt{PMI}}
\newcommand{\ourEdit}{\texttt{mimic-CFG}}

\definecolor{tabhighlight}{HTML}{e5e5e5}
\definecolor{lightCyan}{rgb}{0.925,1,1}

\definechangesauthor[color=blue]{Reviser}

\title{Free Lunch for Stabilizing Rectified Flow Inversion}

\author{Chenru Wang \\
Westlake University \\
\texttt{wangchenru@westlake.edu.cn}
\And
Beier Zhu \\
Nanyang Technological University \\
\texttt{beier.zhu@ntu.edu.sg}
\And
Chi Zhang\thanks{Corresponding author.} \\
Westlake University \\
\texttt{chizhang@westlake.edu.cn}
}

\iclrfinalcopy

\begin{document}
\maketitle
\begin{abstract}

Rectified-Flow (RF)-based generative models have recently emerged as strong alternatives to traditional diffusion models, demonstrating state-of-the-art performance across various tasks. By learning a continuous velocity field that transforms simple noise into complex data, RF-based models not only enable high-quality generation, but also support training-free inversion, which facilitates downstream tasks such as reconstruction and editing.
However, existing inversion methods, such as vanilla RF-based inversion, suffer from approximation errors that accumulate across timesteps, leading to unstable velocity fields and degraded reconstruction and editing quality. To address this challenge, we propose Proximal-Mean Inversion (\ours), a training-free gradient correction method that stabilizes the velocity field by guiding it toward a running average of past velocities, constrained within a theoretically derived spherical Gaussian. 
 Furthermore, we introduce~\ourEdit, a lightweight velocity correction scheme for editing tasks, which interpolates between the current velocity and its projection onto the historical average, balancing editing effectiveness and structural consistency.
 Extensive experiments on PIE-Bench demonstrate that our methods significantly improve inversion stability, image reconstruction quality, and editing fidelity, while reducing the required number of neural function evaluations. Our approach achieves state-of-the-art performance on the PIE-Bench with enhanced efficiency and theoretical soundness.
\end{abstract}

\section{Introduction}

    Recently, Rectified-Flow~(RF)-based generative models have gained increasing popularity over traditional diffusion models, achieving remarkable progress in a wide range of applications such as image synthesis~\citep{labs2025flux1kontextflowmatching, esser2024scalingrectifiedflowtransformers}, video generation~\citep{wang2024frieren,li2025flowdirectortrainingfreeflowsteering,zhao2025realtimemotioncontrollableautoregressivevideo,zhou2025streamingdragorientedinteractivevideo}, and 3D content creation~\citep{li2025triposghighfidelity3dshape, go2025splatflowmultiviewrectifiedflow, li2024flowdreamerexploringhighfidelity, song2025worldforge}. These models operate by learning a continuous velocity field that transforms samples from a simple noise distribution~(typically standard Gaussian) into complex data distributions. 
Beyond generation, RF-based models also offer the potential for inversion task, the process of mapping observed data samples back into their corresponding latent representations using a pre-trained RF-based model. This inversion capability opens up a range of downstream applications without requiring further training or fine-tuning of the model. For instance, high-quality inversion latent can support tasks such as image reconstruction and image editing~\citep{Fireflow,RFI,RFS,Unveil,jiao2025unieditflowunleashinginversionediting,ma2025adamsbashforthmoultonsolver}.

Despite these potentials, achieving accurate and robust inversion remains a significant challenge, and the quality of inversion directly impacts the fidelity, consistency, and controllability of downstream tasks.
A representative example is the DDIM inversion~\citep{song2022denoisingdiffusionimplicitmodels}, which has been widely adopted for inverting diffusion models~\citep{AIDI,GNRI,nulltxt,10483814,kadosh2025tightinversionimageconditionedinversion}. It uses a deterministic sampling equation that includes a neural network term to predict the noise at each timestep. Since the exact latent variable is inaccessible during inversion, the network input is typically approximated using the observed data at the current timestep. This approximation introduces estimation errors at each step, and these errors compound as the inversion progresses through multiple steps. As a result, the inversion latent representation deviates from the original noise prior (e.g., standard Gaussian)~\citep{staniszewski2025againrelationnoiseimage}, which ultimately degrades the performance of tasks such as image reconstruction and editing. Similarly, the inversion of these RF-based models also introduces approximation errors, which affect the performance of downstream tasks.

To address the instability introduced by approximation errors in velocity fields, we aim to improve the reliability of inversion and editing by guiding the perturbed velocity field toward a more stable evolution. Existing research~\citep{insta} has found that such instability from the inherent sparsity of the generative distribution, which makes the approximation errors during inference difficult to eliminate entirely. These errors accumulate across timesteps, leading to disturbed velocity fields and ultimately degrading the stability of the inversion and reconstruction processes. To mitigate this issue, we propose Proximal-Mean Inversion (\ours), which performs gradient correction on the velocity field. At each timestep, \ours~computes a running average of the predicted velocities from the initial step to the current one and applies a proximal update that guides the current velocity prediction toward this averaged direction. 
Furthermore, to prevent the corrected trajectory from deviating into low-density regions of the data distribution, we constrain each correction step within a spherical Gaussian, whose radius is theoretically derived in our work based on the instability analysis.
This correction method stabilizes the inversion process and reduces the probability of falling into low-density regions, without additional training or neural function evaluations (NFEs), and can be seamlessly integrated into any RF-based model. \ours~yields improved reconstruction quality and editing performance while maintaining sampling efficiency.

Beyond reconstruction, we identify that velocity perturbations also adversely affect editing tasks~\citep{staniszewski2025againrelationnoiseimage, jiao2025unieditflowunleashinginversionediting}, one of the most practical applications of inversion. Excessive perturbations to the velocity can degrade structural consistency, while velocities with insufficient deviation can compromise editing capabilities. To strike a balance between structural consistency and editing capabilities, we introduce~\ourEdit, a lightweight velocity correction strategy inspired by classifier-free guidance. Specifically, at each step, we compute the running average of the past velocities and project the current predicted velocity onto this average direction. The final corrected velocity is then obtained by interpolating between the predicted velocity and its projection. Experimental results on the PIE-Bench~\citep{PnP} show that our methods effectively correct velocity, balancing the two aforementioned factors, significantly reducing the number of sampling steps while enhancing editing quality and structural consistency.

We conduct extensive experiments across multiple benchmarks and tasks to evaluate the effectiveness of our methods. Results show that our methods consistently enhance baselines across multiple key metrics. Notably, by integrating our methods, many baselines can reach or exceed state-of-the-art~(SOTA) quality with fewer NFEs, while preserving theoretical soundness and computational efficiency.
Our contributions are summarized as follows:
\begin{itemize}[leftmargin=4mm]
\item We propose \ours, a training-free method that corrects the perturbed velocity field via gradient correction toward a running average, effectively reducing inversion errors in RF-based models.

\item We propose  \ourEdit, a lightweight velocity correction strategy that mirrors the structure of Classifier-Free Guidance~(CFG)~\citep{cfg} by interpolating between the current velocity and its projection onto the running average direction.

\item Our approach establishes a new SOTA on the PIE-Bench, achieving superior editing and reconstruction performance with fewer sampling steps and no additional NFE.

\end{itemize}

\section{Related Work}

\noindent\textbf{Inversion.}
DDIM inversion aims to map real data to noise that approximates standard Gaussian. However, diffusion model’s approximations at each timestep introduce errors, causing deviations from the ideal noise~\cite{Zhu_2025_ICCV,wang2025adaptive,wang2025paralleldiffusionsolverresidual}. To address this, various methods have been proposed~\citep{nulltxt,PTI,NoiseMG,Prox,NPI,edict,editf}, including Fixed-Point Iteration~(FPI), which iteratively refines predictions to reduce local errors~\citep{AIDI,FreeInv,GNRI,Renoise,FPI}, yielding more accurate noise estimates.

With the rise of flow-based models, inversion has shifted from solving stochastic differential equations (SDEs) to ordinary differential equations (ODEs). Building on this evolution, several methods have been proposed: RF-Inversion~\citep{RFI} uses dynamic optimal control to reduce inversion errors; RF-Solver~\citep{RFS} employs high-order Taylor expansions to minimize integration errors; FireFlow~\citep{Fireflow} achieves second-order accuracy with only one NFE per step; and Two-stage Inversion~\citep{Unveil} applies multiple fixed-point iterations per step to further refine predictions. While these approaches improve accuracy, especially those using FPI, they typically increase sampling steps and NFEs, leading to higher computational cost. To overcome this, we propose \ours, which reduces inversion error while requiring fewer sampling steps.

\noindent\textbf{Editing.} Image editing requires conditionally modifying semantic content while preserving original structure. Diffusion-based methods typically follow two steps: (1) invert the input image to its latent noise; (2) generate the edited image via guided sampling under editing constraints. However, inversion errors often degrade control and final image quality. To improve editing accuracy, some efforts introduce additional guidance, such as masks or image features. For instance, Prompt-to-Prompt~\citep{P2P} manipulates cross-attention maps for precise control; MasaCtrl~\citep{Masa} adjusts self-attention to ensure consistency; and Direct Inversion~\citep{PnP} separates source and target diffusion to better preserve content. However, these approaches often require extra inference steps or feature extraction, increasing computational cost.
Beyond inversion-based methods, recent works explore the inversion-free editing, avoiding explicit reconstruction of the latent noise. InfEdit~\citep{infedit} introduces an inversion-free editing framework that reconstructs a consistent denoising path from noise, enabling fast and semantically coherent edits. FlowChef~\citep{flowchef} exploits the straightness of RF models trajectories to propose a gradient-free, inversion-free steering strategy, significantly reducing computational costs for  editing. FlowEdit~\citep{flowedit} introduces a direct ODE mapping between source and target distributions, enabling efficient editing. These approaches substantially improve efficiency, but their reliance on approximate trajectory manipulation can lead to reduced fine-grained consistency.
In contrast,~\ourEdit~preserves the controllability of inversion-based pipelines while requiring no auxiliary models or structured inputs. By reducing inversion error through \ours, it enables accurate semantic edits with fewer sampling steps, striking a practical balance between efficiency and fine-grained consistency.

\section{Preliminaries}
Before introducing our method, we present preliminaries on RF model training and sampling, the inversion process, and the inherent instability of existing inversion methods.

\subsection{Rectified Flow}

Rectified Flow (RF~\citep{liu2022flow} is built on ODEs and learns an almost constant velocity field over the trajectory, enabling faster and more stable sampling. 
 It maps real data $\mathbf{z}_0 \sim p_{\text{data}}$ to Gaussian noise $\mathbf{z}_1 \sim \mathcal{N}(0, \mathbf{I})$ via the following ODE:
\begin{equation}\label{1}
    \frac{d\mathbf{z}_t}{dt} = v_{\bm{\theta}}(\mathbf{z}_t, t), \quad t \in [0, 1].
\end{equation}
Here, $v_{\bm{\theta}}$ is a time-dependent velocity field parameterized by a neural network. 
The model is trained to match the ideal direction $\mathbf{z}_1 - \mathbf{z}_0$ throughout the trajectory by minimizing the following loss:
\begin{equation}\label{2}
    \mathcal{L}_{\text{RF}}(\bm{\theta}) = \mathbb{E}_{\mathbf{z}_0,\mathbf{z}_1,t} \left[\left\| v_{\bm{\theta}}(\mathbf{z}_t, t) - (\mathbf{z}_1 - \mathbf{z}_0) \right\|^2_2\right],
\end{equation}
where $\mathbf{z}_t = (1 - t)\mathbf{z}_0 + t\mathbf{z}_1$ denotes a linear interpolation between $\mathbf{z}_0$ and $\mathbf{z}_1$.
During inference, generation involves solving the learned ODE in reverse—from $\mathbf{z}_1$ back to $\mathbf{z}_0$. Using Euler's method, the reverse update from $t_i$ to $t_{i-1}$ is given by:
\begin{equation}\label{3}
    \mathbf{z}_{t_{i-1}}=\mathbf{z}_{t_{i}}+ (t_{i-1}-t_{i})v_{\bm\theta}(\mathbf{z}_{t_{i}}, t_{i}),
\end{equation}
where $\{ t_i \in [0,1] \}_i$ is a decreasing sequence.

\subsection{DDIM Inversion}
DDIM inversion (Euler scheduler) performs the reverse of the sampling process, mapping a real image into its latent noise. Based on the DDIM update rule, the inversion step of Euler scheduler using approximation is given by:  
\begin{equation}
    \mathbf{z}_{t_{i}} =\mathbf{z}_{t_{i-1}}+(\sigma_{{t_{i}}}-\sigma_{{t_{i-1}}})v_{\theta}(\mathbf{z}_{{t_{i-1}}},{t_{i-1}})
\end{equation}
where $\sigma_{{t_{i}}} $ and $\sigma_{{t_{i-1}}}$ are scheduling parameters.

\subsection{Instability in Inversion}\label{sec:instability}
\citep{insta} show that using ODEs to map noise to images is inherently unstable in high dimensions: small perturbations in the latent space can cause large reconstruction errors.
To quantify this effect, they introduce the geometric mean instability coefficient:
\begin{equation}\label{4}
\bar {\mathcal{E}}_F(\mathbf{z})    = \left( \prod_{i=1}^n \frac{|J_F(\mathbf{z}_1) \mathbf{u}_i|}{|\mathbf{u}_i|} \right)^{\frac{1}{n}},
\end{equation}
where $J_F(\mathbf{z})$ is the Jacobian of $\mathbf{z}$ and $\mathbf{u}_i$ are orthonormal vectors, and when $\bar {\mathcal{E}}_F(\mathbf{z}) > 1$, it indicates that the mapping amplifies any infinitesimal perturbation.
They prove that for any threshold $M > 1$, instability occurs with  probability:
\begin{equation}\label{5}
\mathcal{P}_M := \pi_{\text{real}}\left(\left\{ \mathbf{z} : \bar{\mathcal{E}}_F(F^{-1}(\mathbf{z})) > M \right\}\right)
\ge 1 - \epsilon - \delta,
\end{equation}
where
\begin{equation}\label{6}
\begin{aligned}
\epsilon &:= \pi_{\text{real}}\left( \left\{ \mathbf{z} : p_{\mathsf{gen}}(\mathbf{z}) \ge \frac{1}{(2\pi M^2)^{n/2}} e^{-\frac{2n + 3\sqrt{2n}}{2}} \right\} \right), \\
\delta &:= \pi_{\text{real}}\left( \left\{ \mathbf{z} : \|F^{-1}(\mathbf{z})\|^2 > 2n + 3\sqrt{2n} \right\} \right).
\end{aligned}
\end{equation}
Here, $\pi_\text{real}$ is the real data distribution, and as dimension $n$ increases, both $\epsilon$ and $\delta$ vanish, implying that instability is almost guaranteed in high-dimensional ODE mappings.

\subsection{Inversion Evaluation Without Prompt Bias}
\label{sec:asymmetry}
The inversion quality is ultimately assessed through downstream tasks such as reconstruction or editing. These downstream
processes typically rely on text prompts or other conditional inputs. As a result, the observed
reconstruction or editing performance may be influenced not only by the inversion accuracy but
also by the quality and relevance of the conditioning prompt~\citep{prompt,promt2}. This makes conditional evaluation
insufficient for isolating inversion errors, since prompt mismatch or weak textual alignment can
obscure the true stability of the inversion trajectory.

To avoid the confounding effect introduced by prompts and to obtain a more faithful measure of
inversion quality, we additionally report reconstruction performance under an unconditional setting.
This prompt-free evaluation isolates the inversion process and reveals trajectory drift or accumulated
errors that may be masked in conditional reconstruction. This diagnostic perspective offers a clearer
assessment of inversion stability and complements the conditional metrics used in downstream
editing tasks.

\section{Methodology}\label{section:method}
\subsection{Proximal-Mean Inversion}
As detailed in Sec.~\ref{sec:instability}, RF in inversion sampling is susceptible to errors arising from intrinsic instability, often leading to suboptimal results. We posit that these errors stem from the disruption of the originally near-constant velocity field during inversion, which compromises the stability essential for both inversion and reconstruction.

\begin{minipage}[t]{0.48\linewidth}
\vspace{-15pt}
\begin{algorithm}[H]   %
\caption{Proximal-Mean Inversion (\ours): Euler Case}
\label{alg:PMI}
\textbf{Input}: velocity function ${v}_{\bm{\theta}}(\cdot)$, image $\mathbf{z}_0$, time \\
\makebox[3em][l]{}schedule $\mathcal{T}=\{t_0=0,\dots,t_N=1$\}.\\
\textbf{Output}: $\hat{\mathbf{z}}_1$
\begin{algorithmic}[1]
  \STATE $\mathbf{x}_\mathsf{dist}=\mathbf{0}$
  \FOR{$i=0$ to $N-1$}
    \STATE $\mathbf{v}_{t_i}=v_{\theta}(\mathbf{z}_{t_i},t_i)$
    \STATE $\mathbf{x}_{\mathsf{dist}}+=(t_{i+1}-t_i)\mathbf{v}_{t_i}$
    \STATE $\bar{\mathbf{v}}_{t_i}=\dfrac{1}{t_{i+1}-t_0}\mathbf{x}_{\mathsf{dist}}$
    \STATE $\hat{\mathbf{v}}_{t_i}=\mathbf{v}_{t_i}-r_{t_i}\dfrac{\nabla F(\mathbf{v}_{t_i})}{\|\nabla F(\mathbf{v}_{t_i})\|_2}$
    \STATE $\hat{\mathbf{z}}_{t_{i+1}}=\hat{\mathbf{z}}_{t_i}+(t_{i+1}-t_{i})\hat{\mathbf{v}}_{t_i}$
  \ENDFOR
  \STATE \textbf{return} $\hat{\mathbf{z}}_1$
\end{algorithmic}
\end{algorithm}
\end{minipage}
\hfill
\begin{minipage}[t]{0.48\linewidth}
\vspace{-15pt}
\begin{algorithm}[H]
\caption{\ourEdit~Editing: Euler Case}
\label{alg:mimic-CFG}
\textbf{Input}: velocity function $v_{\bm\theta}(\cdot)$, image $\mathbf{z}_0$, time\\
\makebox[3em][l]{}schedule $\mathcal{T}=\{t_0=0,\dots,t_N=1$\},\\
\makebox[3em][l]{}interpolation weight $w$.\\
\textbf{Output}: $\mathbf{\hat{z}}_0$
\begin{algorithmic}[1]
  \STATE $\mathbf{\hat{z}}_1=\ours(v_{\bm\theta},\mathbf{z}_0,t)$
  \STATE $\mathbf{x}_{\mathsf{dist}}=\mathbf{0}$
  \FOR{$i=N$ to $1$}
    \STATE $\mathbf{v}_{t_i}=v_{\theta}(\mathbf{\hat{z}}_{t_i},t_i)$
    \STATE $\mathbf{x}_{\mathsf{dist}}+=(t_{i-1}-t_{i})\mathbf{v}_{t_{i}}$
    \STATE $\mathbf{\bar{v}}_{t_i}^{\mathsf{edit}}=\dfrac{1}{t_{i-1}-t_N}\mathbf{x}_{\mathsf{dist}}$
    \STATE $\mathbf{\bar{v}}_{t_i}^{\mathsf{proj}}=\dfrac{\mathbf{v}_{t_i}^{\top}\mathbf{\bar{v}}_{t_i}^{\mathsf{edit}}}{\|\mathbf{\bar{v}}_{t_i}^{\mathsf{edit}}\|_2^2}\,\mathbf{\bar{v}}_{t_i}^{\mathsf{edit}}$
    \STATE $\mathbf{\hat{v}}_{t_i}=(1-w)\mathbf{\bar{v}}_{t_i}^{\mathsf{proj}}+w\,\mathbf{v}_{t_i}$
    \STATE $\mathbf{\hat{z}}_{t_{i-1}}=\mathbf{\hat{z}}_{t_i}+(t_{i-1}-t_{i})\mathbf{\hat{v}}_{t_i}$
  \ENDFOR
  \STATE \textbf{return} $\mathbf{\hat{z}}_0$
\end{algorithmic}
\end{algorithm}
\vspace{1pt}
\end{minipage} 

To address this issue, we gradually correct the disturbed velocity field towards a more stable and near-constant one. Following the principle of global consistency, we guide velocity correction using an average velocity as a reference.
Let $\mathbf{v}_{t_i}=v_{\bm{\theta}}(\mathbf{z}_{t_i},t_i)$ denote the predicted velocity at time $t_i$, the average velocity from the initial time $t_0$ to the current time $t_k$ is defined as:
\begin{equation}\label{8}
\bar{\mathbf{v}}_{t_k} := \frac{1}{t_{k+1}-t_0}\sum_{{i=0}}^{k} (t_{i+1}-t_i)\mathbf{v}_{t_i},
\end{equation}
for $k \in \{0, \dots, N-1\}$. 
To stabilize the perturbed velocity field, we employ a proximal objective using the average velocity:
\begin{equation} \label{eq9}
    F(\mathbf{v})=\|\mathbf{v} - \mathbf{v}_{t_{k-1}}\|_1 + \frac{1}{2\lambda}\|\mathbf{v} - \bar{\mathbf{v}}_{t_k}\|^2_2.
\end{equation}
The corrected velocity $\hat{\mathbf{v}}_{t_k}$ at timestep $t_k$ is then
\begin{equation}\label{eq:prox}
\hat{\mathbf{v}}_{t_k}=\mathrm{prox}_{\lambda}(\bar{\mathbf{v}}_{t_k}) 
= \underset{\mathbf{v}}{\arg\min}~F (\mathbf{v}),
\end{equation}
\eqref{eq:prox}~balances two objectives with a trade-off factor $\lambda$ (results see Appendix~\ref{B.1} and \ref{B.2}):
\begin{itemize}[leftmargin=4mm]
    \item $\|\mathbf{v} - \mathbf{v}_{t_{k-1}}\|_1$: encourages local consistency by keeping the corrected velocity close to the previous step. An analysis experiment on the norm choice in $\|\mathbf{v} - \mathbf{v}_{t_{k-1}}\|_p$ (see Sec.~\ref{sec:abl}) shows that $p=1$ yields the best performance.
    \item $\|\mathbf{v}-\bar{\mathbf{v}}_{t_k}\|_2^2$: encourages the velocity toward the global average direction $\bar{\mathbf{v}}_{t_k}$. 
\end{itemize}

However, solving~\eqref{eq:prox} exactly at each timestep is computationally expensive. To improve efficiency, we approximate the objective with a first-order Taylor expansion around the current velocity $\mathbf{v}_{t_k}$:
\begin{equation}
F(\mathbf{v}) \approx F(\mathbf{v}_{t_k}) + \nabla F(\mathbf{v}_{t_k})^\top (\mathbf{v} - \mathbf{v}_{t_k}).
\end{equation}
As a result, the constrained optimization problem becomes:
\begin{equation}\label{eq:step}
\begin{aligned}
\arg\min_{\mathbf{v}} &\nabla F(\mathbf{v}_{t_k})^\top (\mathbf{v}-\mathbf{v}_{t_k}) \\
& \text{s.t. } \|\mathbf{v}-\mathbf{v}_{t_k}\|^2 = r_{t_k}^2.
\end{aligned}
\end{equation}
Solving~\eqref{eq:step} yields a closed-form update (proof in Appendix~\ref{eq:proofcloseform}):
\begin{equation}
\hat{\mathbf{v}}_{t_k} = \mathbf{v}_{t_k} - r_{t_k} \frac{\nabla F(\mathbf{v}_{t_k})}{\|\nabla F(\mathbf{v}_{t_k})\|_2}.
\end{equation}
Here, the gradient direction is normalized, so the update magnitude depends only on the radius $r_{t_k}$. Moreover, we derive Proposition~\ref{prop1}, which provides a condition on $r_{t_k}$ to ensure stability during inversion, based on probability bounds for low-density regions in instability theory.

\begin{prop}
\label{prop1}\textbf{(Stability Condition)}
    Suppose the inverted latent vector $\hat{\mathbf{z}}_1 \in \mathbb{R}^n$ follows a Gaussian distribution
\[
  \hat{\mathbf{z}}_1 \sim \mathcal{N}\bigl(0,\;\sigma^2 \mathbf{I}\bigr),
\]
and define $r = \sqrt{n}\,\sigma,$
where $n$ is the latent dimension and $\sigma>0$ is the scaling factor. The radius that can make the sampling points fall within the high-density region satisfies
\begin{equation}
    r_i = \sqrt{2n+3\sqrt{2n}}\frac{\Delta t_i}{T}+\epsilon,
\end{equation}
where $T$ is the total time, $\Delta t_i = t_i-t_{i-1}$, $\epsilon>0$ and $r_i$ denotes the step-wise correction radius at the $i$-th timestep.
\end{prop}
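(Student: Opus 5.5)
The argument uses the $\delta$-term of the instability bound in Sec.~\ref{sec:instability}: instability is unavoidable when the latent lands outside the ball $\{\|\mathbf{z}\|^2\le 2n+3\sqrt{2n}\}$. I will show that choosing $r_i$ as stated keeps the corrected trajectory within that radius, up to the slack $\epsilon$.

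\textbf{Step 1: the high-density radius.} Write $\hat{\mathbf{z}}_1=\sigma\mathbf{g}$ with $\mathbf{g}\sim\mathcal{N}(0,\mathbf{I})$, so $\|\hat{\mathbf{z}}_1\|^2/\sigma^2\sim\chi^2_n$. Its mean is $n$ and its variance is $2n$. A Laurent--Massart-type tail bound gives
\[
\Pr\bigl(\chi^2_n> n+2\sqrt{nx}+2x\bigr)\le e^{-x}.
\]
Choosing $x$ of order $\sqrt{n}$, or simply taking $n+c\sqrt{2n}$ with a constant $c$, places the typical set inside the ball of squared radius $2n+3\sqrt{2n}$ used to define $\delta$ in \eqref{6}. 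This is the same threshold for which \citet{insta} show that $\delta$ vanishes. I therefore identify the admissible total displacement budget with $R:=\sqrt{2n+3\sqrt{2n}}$, and note that this dominates $r=\sqrt{n}\,\sigma$ when $\sigma\approx1$. Points inside this ball lie in the region where $p_{\mathsf{gen}}$ is not small, so they fall in the high-density region.

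\textbf{Step 2: distributing the budget over time.} Under Algorithm~\ref{alg:PMI}, the corrected trajectory differs from the uncorrected one by the accumulated perturbation
\[
\sum_i \Delta t_i\,(\hat{\mathbf{v}}_{t_i}-\mathbf{v}_{t_i}),
\qquad \|\hat{\mathbf{v}}_{t_i}-\mathbf{v}_{t_i}\|=r_i .
\]
By the triangle inequality, the total correction is bounded by $\sum_i r_i$ or by $\sum_i\Delta t_i r_i$, depending on the normalization. Since the RF velocity field is nearly constant, the natural allocation is proportional to elapsed time: $r_i=R\,\Delta t_i/T$. With $\sum_i\Delta t_i=T$ this gives $\sum_i r_i=R$, so the corrections cumulatively stay inside the radius-$R$ sphere. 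The additive $\epsilon>0$ absorbs the first-order Taylor error from \eqref{eq:step} and the strictness of the boundary, at the cost of an $N\epsilon$ slack that can be made arbitrarily small.

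\textbf{Step 3: conclusion.} Combining the two steps shows that the endpoint stays within $\{\|\mathbf{z}\|^2\le 2n+3\sqrt{2n}\}$ with high probability. This controls the $\delta$ term, which gives the stated condition.

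\textbf{Main obstacle.} The hardest step is Step 2: rigorously linking step-wise correction radii to the final latent norm. This requires either treating the uncorrected trajectory as already typical, or a triangle-inequality argument that is loose. I would first present it as a sufficient budgeting argument, proportional allocation plus slack, rather than a sharp equivalence.
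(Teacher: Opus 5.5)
\textbf{Verdict: genuine gap.} Your Steps~1--2 share the paper's skeleton, but the guarantee you add in Step~3 does not follow, and your use of the instability theorem is reversed.

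\textbf{What matches.} The paper takes the $\delta$-threshold $\sqrt{2n+3\sqrt{2n}}$ from the instability bound as the global radius. It then simply \emph{defines} $r_i:=r\,\Delta t_i/T+\varepsilon$, where $\varepsilon>0$ only serves to make $r_i>r\,\Delta t_i/T$. The paper never claims that the corrected endpoint lands anywhere; its proof is essentially definitional. It also reads the threshold in the opposite direction from you. It writes $r^2=n\hat\sigma^2\ge 2n+3\sqrt{2n}$ ``according to the theorem'' and substitutes this lower bound. Your Step~1 instead treats $R=\sqrt{2n+3\sqrt{2n}}$ as an upper budget with $r=\sqrt n\,\sigma\le R$. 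Both routes reach the same formula only because the formula plugs the threshold value in for the global radius.

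\textbf{The gap in Steps~2--3.} You claim this choice keeps $\hat{\mathbf z}_1$ in $\{\|\mathbf z\|^2\le 2n+3\sqrt{2n}\}$, and that does not follow, for three reasons.

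First, the normalization is not yours to choose. In Algorithm~\ref{alg:PMI} the correction moves the state by $\Delta t_i(\hat{\mathbf v}_{t_i}-\mathbf v_{t_i})$, whose norm is $\Delta t_i r_i$. So $\sum_i r_i=R$ is not a displacement bound; the direct displacement is $(R/T)\sum_i\Delta t_i^2+\varepsilon T\le R\max_i\Delta t_i+\varepsilon T$. Moreover, $v_{\bm\theta}$ is evaluated on the corrected path, so earlier corrections propagate. Bounding this needs the $C_1$-Lipschitz assumption on $v_{\bm\theta}$ and a Gr\"onwall factor $e^{C_1T}$.

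Second, and decisively, any perturbation bound only gives $\|\hat{\mathbf z}_1\|\le\|\mathbf z_1\|+\|\hat{\mathbf z}_1-\mathbf z_1\|$. A typical latent already has $\|\mathbf z_1\|\approx\sqrt n$. A budget of size $R\approx\sqrt{2n}$ therefore permits $\|\hat{\mathbf z}_1\|\approx(1+\sqrt2)\sqrt n>R$. The correction direction $-\nabla F/\|\nabla F\|$ points toward the running mean and previous velocity, and has no reason to point toward the origin. A correct version would assume the uncorrected endpoint is typical and impose a margin such as $e^{C_1T}\bigl(R\max_i\Delta t_i+\varepsilon T\bigr)\le R-\|\mathbf z_1\|$. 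That condition constrains the step size; it does not ``spend'' $R$.

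Third, your link to instability is backwards. The theorem certifies instability on latents \emph{inside} the ball whose images have small $p_{\mathsf{gen}}$. The quantity $\delta$ is the mass \emph{outside} the ball, so shrinking $\delta$ strengthens the lower bound $1-\epsilon-\delta$ on instability rather than controlling it. Also, the ball is a high-density region of the latent Gaussian, not a region where $p_{\mathsf{gen}}$ is large.

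\textbf{How to repair it.} To match what the paper actually establishes, drop Step~3. Present $r_i$ as a proportional allocation of the threshold radius plus slack.
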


The full procedure is summarized in Algorithm~\ref{alg:PMI} and referred to as \ours.
Notably, \ours~can be seamlessly integrated into existing inversion methods (\textit{e.g.}, RF-Solver and FireFlow); here, we illustrate it using Euler's method.
It requires no additional training and effectively stabilize inversion. 

\begin{figure*}[!t]
    \centering
    \includegraphics[width=1\linewidth]{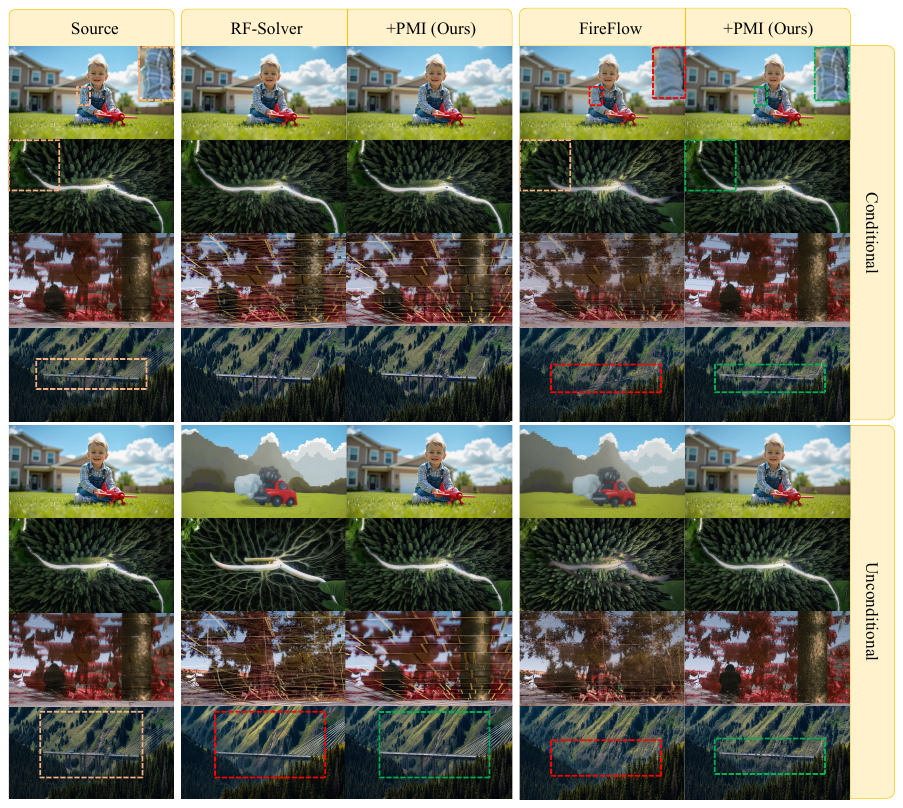}
\caption{Qualitative comparison results for the inversion and reconstruction task on PIE-Bench. 
Baselines enhanced with our method consistently outperform their vanilla counterparts, particularly under the unconditional setting. 
}
    \label{fig:recon}
\end{figure*}

\subsection{mimic-CFG Editing}

Although our instability theory is primarily developed for the Inversion and Reconstruction phases, we observe that similar instability can propagate into the Editing stage due to their shared reliance on the velocity field. Therefore, we apply our correction strategy during Editing as well, to mitigate residual errors.

Similarly, for the editing stage (traversing from $t_N = 1$ to $t_0 = 0$), we define the average velocity up to timestep $t_k$ as:
\begin{equation}
\bar{\mathbf{v}}_{t_k}^{\mathsf{edit}}: = \frac{1}{t_{k-1}-t_N}\sum_{{i=N}}^{k} (t_{i-1}-t_{i})v_{\theta}(\mathbf{z}_{t_{i}}, t_{i}).
\end{equation}
At each timestep $t_k$, we shift the predicted velocity $\mathbf{v}_{t_k}$ toward the average velocity $\bar{\mathbf{v}}_{t_k}^{\mathsf{edit}}$ to enhance stability. We aim to achieve this through interpolation. Prior study~\citep{fan2025cfgzerostar} has shown that interpolating between $\mathbf{v}_{t_k}$ and its projection onto $\bar{\mathbf{v}}_{t_k}^{\mathsf{edit}}$ performs better than direct interpolation between the two vectors. We also empirically verify this finding in our setting (see Appendix~\ref{B.3}). Therefore,  we apply the shortest path update along this direction: we first compute the projection of $\mathbf{v}_{t_k}$ onto the direction of $\bar{\mathbf{v}}_{t_k}^{\mathsf{edit}}$, as follows:
\begin{equation}
\bar{\mathbf{v}}_{t_k}^{\mathsf{proj}} = \frac{ \mathbf{v}_{t_k}^\top \bar{\mathbf v}_{t_k}^{\mathsf{edit}} }{ \| \bar{\mathbf v}_{t_k}^{\mathsf{edit}} \|_2^2 } \bar{\mathbf v}_{t_k}^{\mathsf{edit}}.
\end{equation}
This projected vector $\mathbf{\bar{v}}_{t_k}^{\mathsf{proj}}$ represents the {shortest correction path} from $v_{t_k}$ towards $\mathbf{\bar{v}}_{t_k}^{\mathsf{edit}}$.
Subsequently, we perform linear interpolation between $\mathbf{\bar{v}}_{t_k}^{\mathsf{proj}}$ and $\mathbf{v}_{t_k}$ to obtain the corrected velocity:
\begin{equation}
\mathbf{\hat{v}}_{t_k} = (1 - w)  \mathbf{\bar{v}}_{t_k}^{\mathsf{proj}} + w \cdot \mathbf{v}_{t_k}.
\end{equation}
Here, $w \in [0, 1]$ is an interpolation parameter used to control the degree of correction. An analysis experiment of $w$ (see Sec.~\ref{B.1}) shows that $w= 0.94$  yields the best performance.
It is worth noting that if we treat $\mathbf{\bar{v}}_{t_k}^{\mathsf{proj}}$ as the “unconditional” velocity and $\mathbf{v}_{t_k}$ as the “conditional” velocity, the above interpolation form closely resembles Classifier-Free Guidance (CFG)~\citep{cfg}.
Therefore, we refer to this method as \ourEdit.

\begin{figure*}[!t]
    \centering
    \includegraphics[width=1\linewidth]{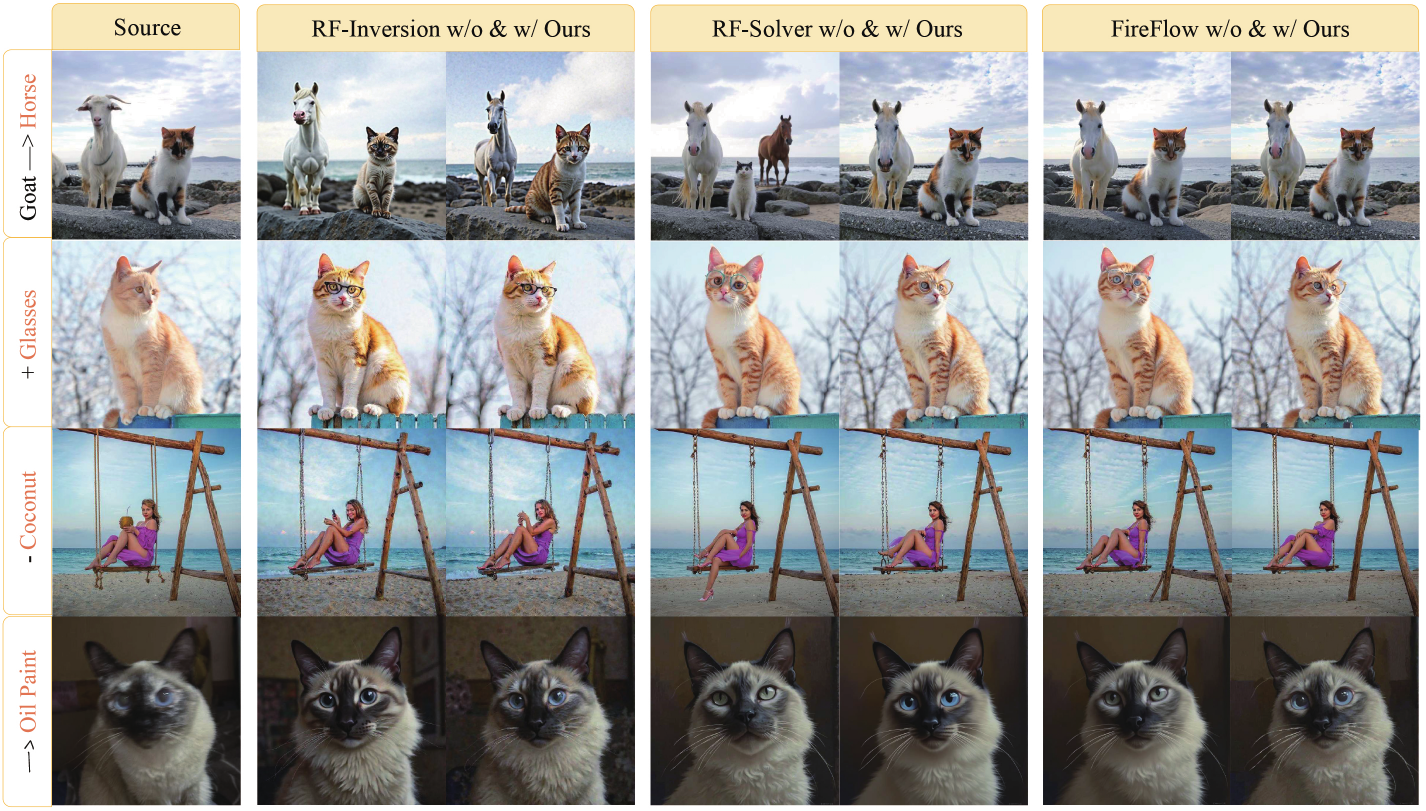}
    \caption{Qualitative comparison results for the editing task on PIE-Bench. The leftmost column shows the source images. Each of the three columns on the right contains two sets of results: the left side shows outputs from the vanilla baseline, while the right side shows outputs enhanced with our PMI and mimic-CFG methods. The results show that our methods can enhance the background preservation and editing quality in different editing categories.
    }
    \label{fig:edit}
\end{figure*}

\begin{figure*}[!h]
    \centering
    \includegraphics[width=1\linewidth]{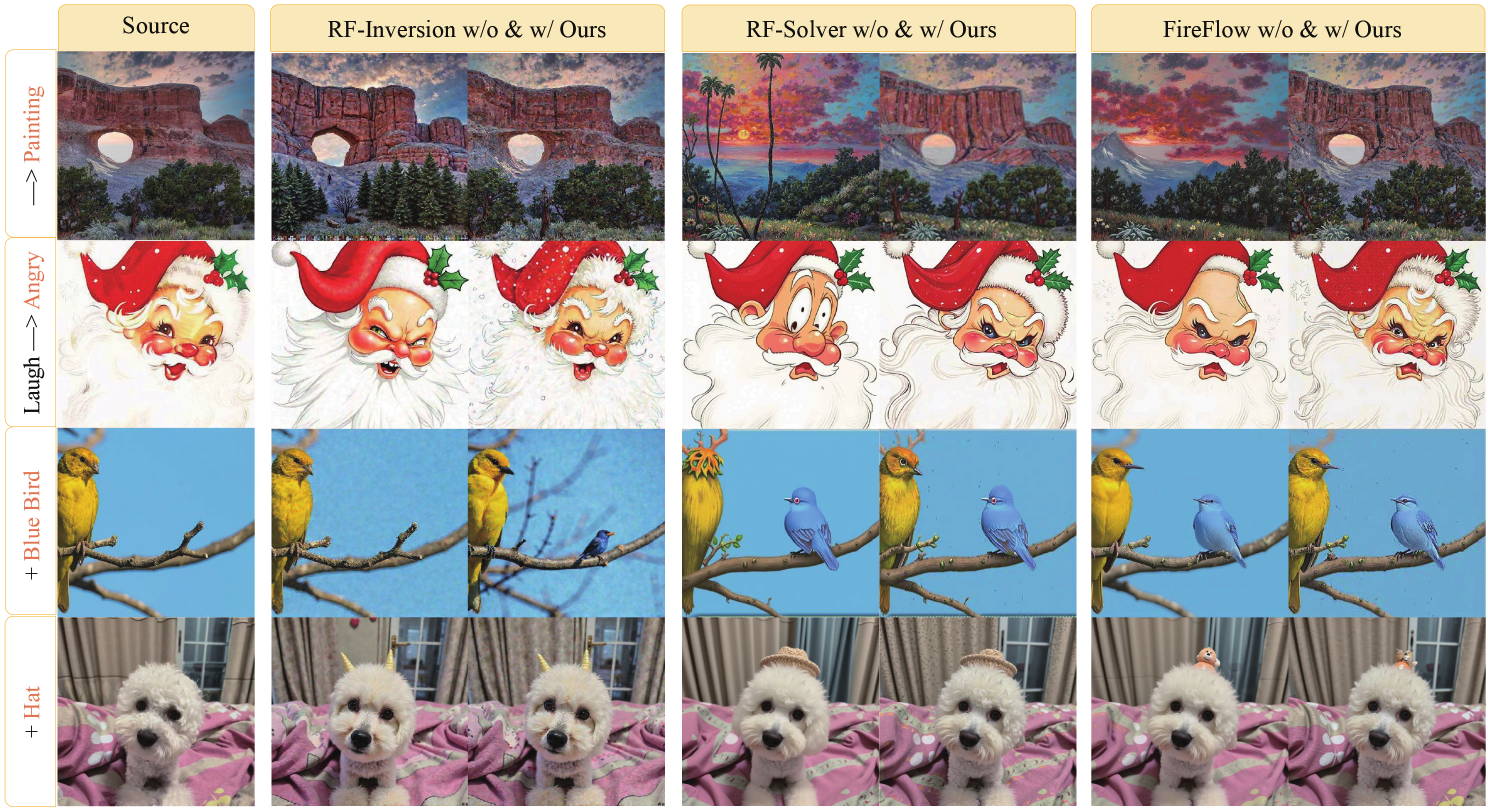}
    \caption{Qualitative comparison of editing results on PIE-Bench (first three) and real-world (last) images. Our method achieves superior adaptation in both synthetic and real-world editing scenarios.}
    \label{fig:cmp-edit2}
\end{figure*}

We present the detailed algorithmic procedure in Algorithm~\ref{alg:mimic-CFG}. Starting from a corrected initial state obtained via \ours, we perform a reverse Euler process over the editing timesteps. The method introduces no additional training and improves stability during editing by aligning the velocity field with the average direction.

\section{Experiments}
\label{section:exp}

\begin{table}[t]
\centering
\fontsize{9}{10}\selectfont
\setlength{\tabcolsep}{9pt} 
\caption{Quantitative results for inversion and reconstruction task. Our method consistently outperforms baseline models, achieving higher PSNR and SSIM, and lower MSE and LPIPS. Best results are in bold.}
\resizebox{\linewidth}{!}{
\begin{tabular}{l|cccc|cccc}
\toprule
\multirow{2}{*}{{Method}} & \multicolumn{4}{c|}{{Conditional}} & \multicolumn{4}{c}{{Unconditional}} \\
 & $\text{PSNR}\uparrow$ &  $\text{LPIPS}_{\times10^3}^{\downarrow}$ & $\text{MSE}_{\times10^4}^{\downarrow}$ & $\text{SSIM}_{\times10^2}^{\uparrow}$
 & $\text{PSNR}\uparrow$ &  $\text{LPIPS}_{\times10^3}^{\downarrow}$ & $\text{MSE}_{\times10^4}^{\downarrow}$ & $\text{SSIM}_{\times10^2}^{\uparrow}$\\

\midrule
Euler         & 22.10 & \textbf{168.48} & 96.58 & 79.36 &21.96 & 189.26 & 101.57 & 77.45 \\
\quad + \ours~(ours)    & \textbf{22.56} & 179.10 & \textbf{76.17} & \textbf{79.39} &  \textbf{23.26} & \textbf{166.71} & \textbf{83.59} & \textbf{79.92}\\
\midrule
Heun          & 29.16 &  63.62 & 30.76 & 89.93 & 27.76 &  78.84 &  40.42 & 88.51 \\
\quad + \ours~(ours)      & \textbf{30.38} & \textbf{53.25} & \textbf{21.51} & \textbf{90.34} & \textbf{29.86} & \textbf{62.67} & \textbf{32.44} & \textbf{90.26} \\
\midrule
RF-Solver     & 29.17 &  63.63 & 28.18 & 90.01 & 27.81 &  78.46 &  39.68 & 88.53 \\
\quad + \ours~(ours) & \textbf{30.72} & \textbf{53.69} & \textbf{22.42} & \textbf{91.23} & \textbf{29.87} & \textbf{62.66} & \textbf{29.88} & \textbf{90.29} \\
\midrule
FireFlow      & 29.72 &  55.87 & 23.05 & 90.86 & 28.87 &  66.04 &  27.03 & 89.76 \\
\quad + \ours~(ours)  & \textbf{30.42} & \textbf{53.48} & \textbf{18.44} & \textbf{91.22} & \textbf{29.73} & \textbf{62.51} & \textbf{23.72} & \textbf{90.34} \\
\bottomrule
\end{tabular}
}
\label{tab:inv_recon}
\end{table}

\subsection{Setup}

\noindent\textbf{Baseline.} To demonstrate the effectiveness of our method, we selected several fundamental and commonly used RF-based baselines.
For the inversion and reconstruction tasks, we incorporate our method into four solvers, Euler, Heun, RF-Solver~\citep{RFS}, and FireFlow~\citep{Fireflow}, and compare their performance against their respective vanilla counterparts.
For the editing task, we apply our method to the same four solvers as well as RF-Inversion~\citep{RFI}, resulting in five edited variants, and evaluate them against their vanilla counterparts.

\begin{table}[t!]
    \fontsize{9}{10}\selectfont
    \setlength{\tabcolsep}{9pt}
    \centering
        \caption{Quantitative results for editing task. Our methods consistently improve CLIP similarity while maintain and slightly improve the background preservation metrics PSNR and SSIM. Best results are in bold and identical results are underlined.}
    \resizebox{\linewidth}{!}{
    \begin{tabular}{l|c|cccc|cc|c}
        \toprule
        \multirow{2}{*}{Method} & \multirow{1}{*}{Structure} & \multicolumn{4}{c|}{Background Preservation} & \multicolumn{2}{c|}{CLIP Similarity}  & \multirow{2}{*}{$\mathrm{NFE}\downarrow$} \\

        & $\mathrm{Distance}_{\times10^3}^{\downarrow}$ & $\mathrm{PSNR}\uparrow$ &  $\mathrm{LPIPS}_{\times10^3}^{\downarrow}$ & $\mathrm{MSE}_{\times10^4}^{\downarrow}$ & $\mathrm{SSIM}_{\times10^2}^{\uparrow}$ & $\mathrm{Whole}\uparrow$ & $\mathrm{Edited}\uparrow$  & \\
    
        \midrule
        Euler & 50.08 & 20.57 & 187.47 & 135.03 & 76.02 & 25.73 & 23.00 & 50 \\
        \quad + Ours & \textbf{46.06} & \textbf{21.41} & \textbf{185.55} & \textbf{121.78} & \textbf{76.04} & \textbf{25.87} & \textbf{23.01} & 50 \\
        \midrule
        Heun & 44.51 & 20.51 & 176.09 & 137.53 & 77.45 & 25.79 & 23.08 & 48 \\
        \quad + Ours & \textbf{42.24} & \textbf{20.56} & \textbf{175.27} & \textbf{124.57} & \textbf{77.57} & \textbf{25.91} & \textbf{23.09} & 48 \\
        \midrule
        RF-Inversion & 63.41 & 18.03 & 252.89 & \underline{21.44} & 62.65 & 25.02 & 22.56 & 56 \\
        \quad + Ours & \textbf{63.39} & \textbf{18.04} & \textbf{247.13} & \underline{21.44} & \textbf{62.94} & \textbf{25.44} & \textbf{22.83} & 56 \\
        \midrule
        RF-Solver &33.71 &21.29 &156.51 &96.24 &79.63 &25.29 &22.61 & 60\\
        \quad + Ours &\textbf{32.69} &\textbf{22.18} &\textbf{146.85} &\textbf{85.64} &\textbf{80.39} &\textbf{25.49} &\textbf{22.72}  & 60 \\
        \midrule
        FireFlow & 29.39 & 22.40 & 136.49 & 79.35 & 80.78 & 25.35 & 22.52 & 18 \\
        \quad + Ours & \textbf{28.01} & \textbf{22.79} & \textbf{128.55} & \textbf{78.02} & \textbf{81.76} & \textbf{25.50} & \textbf{22.68} & 18 \\
        \bottomrule
    \end{tabular}
}
    \label{tab:edit-cmp}
\end{table}

\noindent\textbf{Implementation details.} 
In all experiments, we use Flux.1-dev~\citep{labs2025flux1kontextflowmatching} as the underlying model for both reconstruction and editing tasks.
For inversion and reconstruction experiments, we ensure that each baseline method, both with and without our proposed velocity correction strategy (\ours), shares the same NFE within each group to ensure fair comparison.
For the editing task, we adopt the best settings reported in FireFlow for RF-Inversion, RF-Solver, and FireFlow, while maintaining the NFE around 50 for both the Euler and Heun samplers. 
More details see Appendix~\ref{C}.

\noindent\textbf{Evaluate metrics.}
All experiments are conducted on the PIE-Bench~\citep{PnP}, which consists of 700 images spanning 10 different editing categories.
For the inversion and reconstruction task, we evaluate reconstruction quality using four widely adopted metrics: Peak Signal-to-Noise Ratio (PSNR)~\citep{psnr}, Learned Perceptual Image Patch Similarity (LPIPS)~\citep{lpips}, Mean Squared Error (MSE), and Structural Similarity Index Measure (SSIM)~\citep{SSIM}.
For the editing task, the same four metrics are used to assess background preservation in unedited areas. To evaluate editing fidelity, we introduce the CLIP similarity~\citep{clip} for both the whole image and the edited regions. Furthermore, we employ Structure Distance~\citep{PnP} to evaluate edit-irrelative context preservation.

\subsection{Inversion and Reconstruction}\label{sec:IR-cmp}

\textbf{Quantitative comparison.}

\begin{wrapfigure}{r}{0.48\textwidth}
    \vspace{-10pt}
    \centering
    \includegraphics[width=1\linewidth]{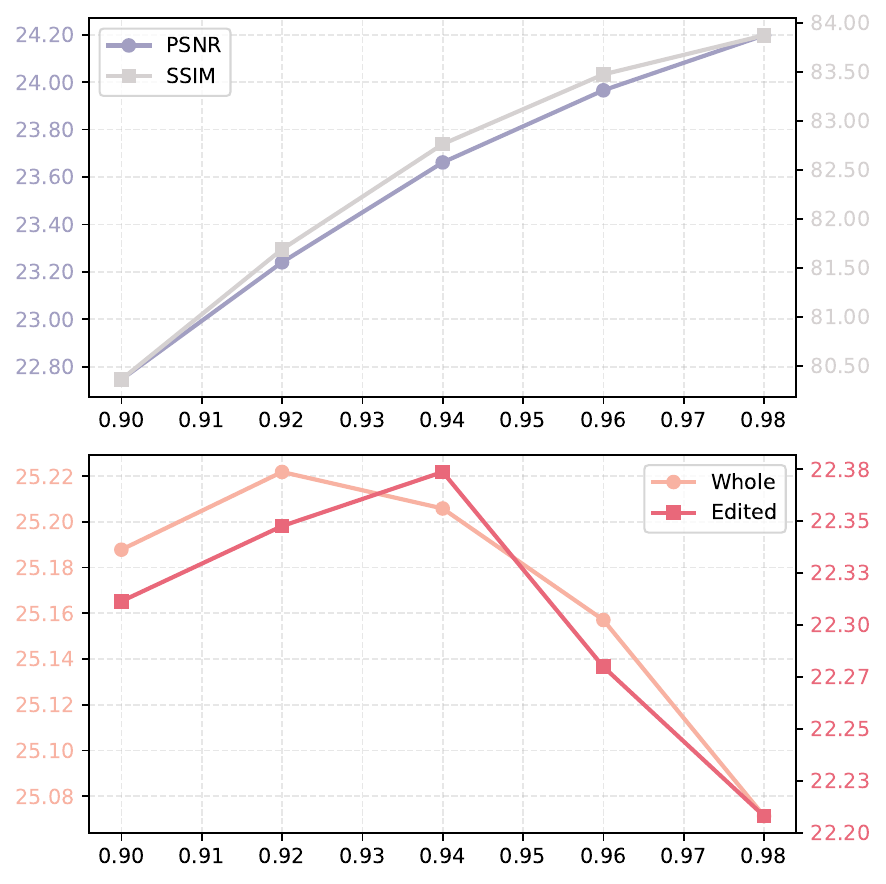}
    \caption{Analysis experiment of the parameter $w$ in mimic-CFG. PSNR and SSIM improve with moderate correction, while over correction (small $w$) harms both background preservation and editing quality.}
    \label{fig:ablationw}
    \vspace{-10pt}
\end{wrapfigure}

To prove the effectiveness of our proposed correction method, we compare it with various baseline models on inversion and reconstruction tasks, as shown in Tab.~\ref{tab:inv_recon}. The results indicate consistent improvements with our correction method (\ours) across all baseline samplers. Compared to their vanilla counterparts, our method achieves higher PSNR and SSIM, and lower MSE and LPIPS, with particularly notable improvements on Heun (PSNR improved by 1.22) and RF-Solver (PSNR improved by 1.55). Additionally, we evaluate the unconditional setting for each group, where our method maintains strong performance even in the absence of prompts. These results demonstrate the effectiveness of our approach in stabilizing the inversion process, enhancing reconstruction quality, and reducing the likelihood of falling into low-density regions. Additionally, we present the reconstruction error in Appendix~\ref{sec:recon-error}.

\noindent\textbf{Qualitative comparison.}
Fig.~\ref{fig:recon} demonstrates a qualitative comparison of the inversion and reconstruction task on RF-Solver and FireFlow, both with and without our method. Our approach significantly improves reconstruction quality in both conditional and unconditional settings. In the unconditional case, baseline models without our method exhibit noticeable degradation in detail and structural fidelity.

\subsection{Editing}\label{sec:E-cmp}
\textbf{Quantitative comparison.}
To validate the effectiveness of our method, we demonstrate the quantitative results for text-driven image editing on PIE-Bench in Tab.~\ref{tab:edit-cmp}. Integrating our method with all baseline samplers results in clear improvements in background preservation metrics such as PSNR and SSIM, while maintaining a competitive CLIP similarity. 
For instance, our method significantly improves both CLIP similarity on RF-Inversion while maintaining and slightly enhancing background preservation.
We also compared the same baselines with and without our method across different NFE. The results show that the baseline with our method achieves similar performance with significantly lower NFE (see in Appendix~\ref{sec:cmp-lnfe}). These results indicate that our method effectively corrects velocity, balancing background preservation and editing quality, while significantly reducing the number of sampling steps.

\noindent\textbf{Qualitative comparison.}
In Fig.~\ref{fig:edit}, we compare three methods with and without our methods. We can see that the baseline methods often have poor background preservation or low editing quality. These problems usually need more sampling steps to improve. In contrast, our editing method fixes these issues while using same or fewer sampling steps. 

Here, to further demonstrate the practical effectiveness of our method in real-world settings, we present additional qualitative comparative results on editing tasks, as shown in Fig.~\ref{fig:cmp-edit2}. The first three images are from PIE-Bench, while the last one is a real-world photo. The results demonstrate that our method better facilitates the model's adaptation to both benchmark and real-world image editing tasks.

\subsection{Analysis Experiments}\label{sec:abl}

\noindent\textbf{Analysis of $w$.}
In Fig.\ref{fig:ablationw}, we demonstrate the result of the analysis experiment of the parameter $w$ in our proposed~\ourEdit~ method. The result indicates that $w = 0.94$ provides the best balance between background preservation and editing quality. As $w$ increases, both PSNR and SSIM improve. However, as the velocity correction effect weakens (when $w$ decreases), the editability worsens.

\noindent\textbf{Analysis on the choice of norm.}

\begin{wraptable}{r}{0.48\textwidth}
    \vspace{-10pt}
    \fontsize{9}{10}\selectfont
    \setlength{\tabcolsep}{6pt}
    \centering
        \caption{Analysis Experiment on the choice of norm. The $L_1$ norm achieves slightly better performance. Best results are in bold and identical results are underlined.}
    \resizebox{\linewidth}{!}{
    \begin{tabular}{l|cccc}
        \toprule
        $\mathrm{Method}$ & $\mathrm{PSNR}\uparrow$  & $\mathrm{SSIM}_{\times10^2}^{\uparrow}$ & $\mathrm{Whole}\uparrow$ & $\mathrm{Edited}\uparrow$ \\
        \midrule
        w/o norm   & 23.66 & 82.76 & \underline{25.20} & 22.37 \\
        $L_1$-norm   & \textbf{23.67} & \textbf{82.77} & \underline{25.20} & \textbf{22.38} \\
        $L_2$-norm   & 23.65 & 82.75 & 25.18 & 22.32 \\
        \bottomrule
    \end{tabular}
        }
    \label{tab:norm}
        \vspace{-10pt}
\end{wraptable}

Tab.~\ref{tab:norm} shows the analysis experiment on the norm used in the $\|\mathbf{v} - \mathbf{v}_{t_{k-1}}\|_p$. We compare the setting with no such term, using the $L_1$ norm, and using the $L_2$ norm. The evaluation is conducted on PIE-Bench with PSNR, SSIM, and CLIP similarity for both the whole image and the edited region. The $L_1$ norm achieves slightly better performance on PSNR, SSIM and CLIP similarity for the edited region, indicating it provides a more robust regularization while maintaining local consistency.

\section{Conclusion}

In this paper, we introduce Proximal-Mean Inversion (\ours), a velocity correction method for RF-based models that mitigates disturbances in the velocity field and reduces accumulated errors during inversion. Furthermore, to prevent the corrected trajectory from deviating into low-density regions, we constrain each correction step within a spherical Gaussian, with the radius derived by us based on instability analysis. Building on this correction concept, we also present~\ourEdit, a lightweight correction strategy for editing tasks, which guides the predicted velocity towards the running average direction to achieve an improved balance between background preservation and editing fidelity. Both methods are training-free and plug-and-play. Extensive experiments on PIE-Bench demonstrate that our approaches significantly enhance reconstruction and editing quality, reduce sampling steps compared to state-of-the-art RF-based inversion methods.

\section*{Ethics Statement}
\paragraph{Data and Privacy.} 
All experiments in this paper are conducted on publicly available datasets, including PIE-Bench, which consists of benchmark images for inversion and editing evaluation. 
These datasets contain no personally identifiable information (PII), sensitive user data, or proprietary content. 
We strictly follow the licenses of all datasets and models employed, ensuring that our usage complies with both legal and ethical norms.

\paragraph{Research Purpose and Potential Risks.} 
Our proposed methods, Proximal-Mean Inversion (\ours) and~\ourEdit, are designed to improve the stability and controllability of rectified-flow (RF)-based generative models during inversion and editing. 
The contributions are methodological and theoretical, aiming to reduce approximation errors and enhance reconstruction fidelity and editing quality. 
Nevertheless, generative models can be misused in ways that cause harm, such as generating disinformation, creating manipulated media without consent, or reinforcing existing biases in training data. 
We acknowledge these risks and emphasize that our work is intended for academic research and responsible downstream applications only. 
We explicitly discourage malicious use of the proposed methods, particularly in domains where manipulation may infringe on human rights, amplify stereotypes, or spread harmful misinformation.

\paragraph{Fairness and Bias.} 
Since our work builds upon pre-trained generative models, it may inherit dataset biases or artifacts embedded in these models. 
Our methods do not introduce additional bias mitigation, but by improving inversion and editing stability, they may reduce certain error-induced distortions in downstream tasks. 
We encourage further studies to combine our correction strategies with fairness-aware training or evaluation pipelines to ensure more equitable outcomes.

\paragraph{Environmental and Computational Impact.} 
All experiments were performed on a single NVIDIA A100 GPU with 40GB memory. 
Compared to training large generative models from scratch, our approach is training-free and thus imposes a relatively small computational and environmental footprint. 
We further optimized our experiments to minimize unnecessary runs and repeated evaluations. 
Nevertheless, we recognize the broader sustainability concerns of large-scale machine learning and encourage future research to adopt resource-conscious practices.

\paragraph{Contribution to Society.} 
We believe that this research advances fundamental understanding of inversion and editing in generative models, offering new tools for trustworthy and efficient applications. 
Potential positive applications include image restoration, creative content generation with proper consent, and controllable editing that preserves structural fidelity. 
We expect these contributions to support the responsible development of AI systems that are transparent, reproducible, and beneficial to society.

\paragraph{Conflicts of Interest.} 
The authors declare no conflicts of interest.

\section*{Reproducibility Statement}

We have taken multiple steps to ensure the reproducibility of our work. 
First, we provide complete algorithmic descriptions of both Proximal-Mean Inversion (\ours) and~\ourEdit~in Sec.~\ref{section:method}.
Additional theoretical details, such as proofs of stability conditions and error analysis, are included in Appendix \ref{App:A}. 
Second, we report all experimental settings, including dataset specifications, evaluation metrics, and hyperparameter ranges, in Sec.~\ref{section:exp} and Appendix~\ref{C}, with detailed analysis experiments provided in Appendix \ref{App:B}. 
The computing infrastructure, including GPU, CPU, memory, and software environment, is documented in Appendix \ref{C1}
Finally, to further facilitate reproducibility, we provide anonymized source code as supplementary material, which implement our methods reported in the paper.

\bibliography{iclr2026_conference}
\bibliographystyle{iclr2026_conference}

\appendix
\clearpage
\setcounter{secnumdepth}{2}

\section{Missing Proofs}\label{App:A}
\setcounter{equation}{0} 
\setcounter{prop}{0} 
\renewcommand{\theequation}{A.\arabic{equation}}
\renewcommand{\thesubsection}{A.\arabic{subsection}}

\subsection{Close form Solution of~\eqref{eq:step}}\label{eq:proofcloseform}
We first restate the equation as follows:
\begin{equation}\label{argmin}
\begin{aligned}
    \mathop{\mathrm{argmin}}_{\mathbf{v}_{t_k}}\nabla F(\mathbf{v}_{t_k})(\mathbf{v}-\mathbf{v}_{t_k}) \\
    \mathrm{s.t.} \|\mathbf{v}-\mathbf{v}_{t_k}\|^2=r^2
\end{aligned},
\end{equation}
where $\mathbf{v}$ and $\mathbf{v}_{t_k}$ denote the corrected and current velocities at time $t_k$, respectively, and $r$ is the radius of the spherical Gaussian centered at $\mathbf{v}_{t_k}$. The optimal solution can be derived as follows.

First we reparameterize the constrain condition
\begin{equation}
    \mathbf{v}=  \mathbf{v}_{t_k}+r\mathbf{d},
\end{equation}
where $\|\mathbf{d}\|_2^2=1$, and $\mathbf{d}$ is the direction from $\mathbf{v}_{t_k}$ to $\mathbf{v}$, substituting this into the original objective, we obtain:
\begin{equation}\label{eq:A3}
    \begin{aligned}
    &\mathop{\mathrm{argmin}}_{\mathbf{v}_{t_k}}\nabla F(\mathbf{v}_{t_k})(\mathbf{v}-\mathbf{v}_{t_k})\\
    =&\mathop{\mathrm{argmin}}_{\mathbf{v}_{t_k}}\nabla F(\mathbf{v}_{t_k})( \mathbf{v}_{t_k}+r\mathbf{d}-\mathbf{v}_{t_k})\\
=&\mathop{\mathrm{argmin}}_{\|\mathbf{d}\|_2^2=1} ~ r\nabla F(\mathbf{v}_{t_k})^T\mathbf{d}.
    \end{aligned} 
\end{equation}
We now construct the Lagrangian function:
\begin{equation}
    \mathcal{L}(\mathbf{d},\lambda)=r\nabla F(\mathbf{v}_{t_k})^Td+\lambda(\|\mathbf{d}\|_2^2-1)
\end{equation}
then we have the KKT condition
\begin{equation}
        \left\{\begin{matrix}
            \nabla\mathcal{L}=  r\nabla F(\mathbf{v}_{t_k})+2\lambda \mathbf{d}  \\
             \|\mathbf{d}\|_2^2=1    \\
            \lambda>0
\end{matrix}\right..
\end{equation}
Setting $\nabla\mathcal{L}=0$ yields:
\begin{equation}
    \mathbf{d} = - \frac{r\nabla F(\mathbf{v}_{t_k})}{2\lambda}.
\end{equation}
Substituting into the constraint $\|\mathbf{d}\|_2^2=1$ and solving for $\lambda$, we get:
\begin{equation}
    \lambda = \frac{r\|\nabla F(\mathbf{v}_{t_k})\|}{2}.
\end{equation}
Thus, the optimal solution for $\mathbf{d}$ is:
\begin{equation}
    \mathbf{d}=-\frac{\nabla F(\mathbf{v}_{t_k})}{\|\nabla F(\mathbf{v}_{t_k})\|}.
\end{equation}
Therefore, the closed-form solution of Equation 11 is:
\begin{equation}
    \mathbf{v}=  \mathbf{v}_{t_k}- r\frac{\nabla F(\mathbf{v}_{t_k})}{\|\nabla F(\mathbf{v}_{t_k})\|},
\end{equation}
which equals to:
\begin{equation}
    \mathbf{\hat{v}}_{t_k}=  \mathbf{v}_{t_k}- r\frac{\nabla F(\mathbf{v}_{t_k})}{\|\nabla F(\mathbf{v}_{t_k})\|}.
\end{equation}

\subsection{The range of $r$ during the Inversion.}
Inspired by \cite{insta} and \cite{Gaussian}, we want to reduce the probability of the model falling into the low density regions by using the spherical Gaussian. Next we give the proof of Proposition \ref{prop2}

\begin{prop}
\label{prop2}
    Suppose the inverted latent vector $\mathbf{\hat z_1} \in \mathbb{R}^n$ follows a Gaussian distribution
\[
  \mathbf{\hat z_1} \sim \mathcal{N}\bigl(0,\;\sigma^2 \mathrm{I}\bigr),
\]
and define
\[
  r = \sqrt{n}\,\sigma,
\]
where $n$ is the latent dimension and $\sigma>0$ is the scaling factor. According to the instability theorem, the radius that can make the sampling points fall within the high-density region satisfies
\[
r_i = \sqrt{2n+3\sqrt{2n}}\frac{\Delta t_i}{T}+\epsilon,
\]
where $T$ is the total time, $\Delta t_i = t_i-t_{i-1}$, $\epsilon>0$ and $r_i$ denotes the step-wise correction radius at the $i$-th time step.
\end{prop}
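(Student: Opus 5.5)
The plan is to combine the concentration threshold already built into the instability bound of Sec.~\ref{sec:instability} with the Gaussian annulus picture, and then split the total admissible radius across timesteps in proportion to the step sizes.

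\textbf{Step 1: the admissible total radius.} For $\hat{\mathbf{z}}_1 \sim \mathcal{N}(0,\sigma^2\mathrm{I})$, the norm concentrates around $r=\sqrt{n}\,\sigma$: $\|\hat{\mathbf{z}}_1\|^2/\sigma^2$ is $\chi^2_n$ with mean $n$ and standard deviation $\sqrt{2n}$. In equation~\ref{6}, the event defining $\delta$ is exactly the $\chi^2_n$ tail $\{\|F^{-1}(\mathbf{z})\|^2 > 2n+3\sqrt{2n}\}$, i.e.\ three standard deviations above the mean. A Laurent--Massart type bound shows this probability is small and vanishes as $n$ grows. Keeping $\delta$ small keeps the lower bound $1-\epsilon-\delta$ in equation~\ref{5} meaningful. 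I will therefore take the high-density region to be the ball (shell) of squared radius $2n+3\sqrt{2n}$ in units of $\sigma$. Normalising $\sigma=1$, as for the standard Gaussian prior the inversion should reach, the total allowable radius is $R=\sqrt{2n+3\sqrt{2n}}$.

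\textbf{Step 2: per-step allocation.} The correction at step $i$ changes the velocity by at most $r_i$ (the closed form in Appendix~\ref{eq:proofcloseform}). The change in $\hat{\mathbf{z}}_1$ contributed by step $i$ is $\Delta t_i$ times the velocity change. Summing over steps via the triangle inequality bounds the total displacement by $\sum_i \Delta t_i\, r_i$, and $\sum_i \Delta t_i = T$. Requiring this total to stay within the radius $R$ of Step 1 suggests spending the budget uniformly in time, i.e.\ making the displacement of step $i$ proportional to $\Delta t_i/T$. This gives the leading term $R\,\Delta t_i/T$. The additive slack $\epsilon>0$ then makes the radius strictly cover the boundary shell: points exactly at $\|\cdot\|^2 = 2n+3\sqrt{2n}$ are allowed, and $\epsilon$ absorbs the first-order Taylor error of equation~\ref{eq:step}.

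\textbf{Main obstacle.} The hard part is making the passage from Step 1 to Step 2 honest. The statement asserts an equality, whereas the argument naturally yields only a sufficient condition of the form $\sum_i r_i\,(\cdot) \le R+\epsilon$. The units of $r_i$ as a velocity radius versus a displacement radius also have to be reconciled. I would treat $r_i$ as the share of the displacement budget on the time-normalised scale, so that $\sum_i r_i = R + N\epsilon$. I would then present the proposition as the choice that saturates this budget uniformly, rather than as a uniquely forced value.
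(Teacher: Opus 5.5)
Your proposal follows essentially the same route as the paper: it likewise obtains the total radius $\sqrt{2n+3\sqrt{2n}}$ from the $\delta$-threshold via $n\hat{\sigma}^2=\|\hat{\mathbf{z}}_1\|^2$ (though it writes this as a lower bound $r\ge\sqrt{2n+3\sqrt{2n}}$ rather than as the radius of an enclosing ball), then simply \emph{defines} $r_i=r\,\Delta t_i/T+\varepsilon$ as the time-proportional share plus slack, so your reading of the formula as a chosen allocation rather than a forced equality is exactly how the paper treats it. Two small corrections: $2n+3\sqrt{2n}$ is not three standard deviations above the $\chi^2_n$ mean $n$ (it exceeds it by $n+3\sqrt{2n}$, which only makes $\delta$ smaller), and if $r_i$ is the velocity radius used in Algorithm~\ref{alg:PMI}, your triangle-inequality budget with per-step displacement $\Delta t_i r_i$ would give $r_i=R/T$ rather than $R\,\Delta t_i/T$, so drop that motivation and keep the bare proportional-share definition, which is all the paper uses.
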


\begin{proof}
We first numerically calculate the variance $\hat{\sigma}$ of $\mathbf{\hat z_1}$
\begin{equation}
    \hat{\sigma}^2=\frac{1}{n}\sum_{i=0}^{n}(\mathbf{\bar z_i}-\hat{\mu})^2=\frac{1}{n}\|\mathbf{\bar z_1}\|^2,
\end{equation}
where $\mathbf{\bar z_1}$ and $\mathbf{\bar z_i}$ denote the flattened representations of $\mathbf{\hat z_1}$ and $\hat{z}_i$, respectively. According to the theorem, we have
\begin{equation}
\hat{\sigma}^2n  \ge 2n + 3\sqrt{2n}.
\end{equation}
Since $r = \sqrt{n}\,\sigma$, this can be equivalently written as
\begin{equation}
r^2 \ge 2n + 3\sqrt{2n}.
\end{equation}
Therefore, we obtain
\begin{equation}
r \ge \sqrt{2n + 3\sqrt{2n}}.
\end{equation}
Suppose that $\forall \varepsilon>0$ and set $\delta=\dfrac{\varepsilon}{r}>0$.  
When
\begin{equation}
  0<s_i=\frac{\Delta t_i}{T}<\delta,
\end{equation}
we have
\begin{equation}
  r s_i < \varepsilon.
\end{equation}
Define the local radius
\begin{equation}
  r_i = r s_i + \varepsilon.
\end{equation}
Then $r_i - r s_i = \varepsilon > 0$, and explicitly
\begin{equation}
  r_i = \sqrt{2n+3\sqrt{2n}}\,\frac{\Delta t_i}{T} + \varepsilon.
\end{equation}
Hence, for every discrete index $i$, the lower bound on the radius remains strictly larger than $r s_i$, while approaching $r s_i$ as $\varepsilon \to 0$. 

\end{proof}

\subsection{Error Analysis}\label{sec:error_proof}
\begin{assumption}
\label{Assumption 1}
The velocity function \( v_{\theta}(\cdot,\tau) \) is \( C_1 \)-Lipschitz for \( \forall \tau \), i.e., given a \( \tau \),
\[
\| v_{\theta}(\zeta_1, \tau) - v_{\theta}(\zeta_2, \tau) \| \leq C_1 \| \zeta_1 - \zeta_2 \|, \quad  \forall \zeta_1, \zeta_2.
\]
\end{assumption}

\begin{assumption}
\label{Assumption 2}
The velocity function $v_{\theta}(\zeta, \cdot)$ is $C_2$-Lipschitz for $\forall \zeta$, i.e., given a $\zeta$, 
\[
\| v_{\theta}(\zeta, \tau_1) - v_{\theta}(\zeta, \tau_2) \| \le C_2 \| \tau_1 - \tau_2 \| \quad \forall \tau_1, \tau_2.
\]
\end{assumption}

\begin{prop}\label{prop2}
    Suppose that the velocity field $v_{\theta}$ is Lipschitz, and there is a constant L s.t. $\|\mathbf{z}_{t_i}-\mathbf{z}_{t_{i-1}}\|_{2}\le L\|t_i-t_{i-1}\|$,  $\forall t_i,t_{i-1} \in [0,1]$. Then for any two consecutive steps $t_{i-1}$ and $t_i$, the local error of inversion and reconstruction using our method is $\mathcal{O}(\Delta t_i^2)$, achieves the same local truncation error as the standard Euler method, where $\Delta t_i = t_i-t_{i-1}$.
\end{prop}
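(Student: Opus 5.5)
The plan is to compare one step of the corrected scheme with the exact flow over $[t_{i-1},t_i]$ and check that every extra term is at least second order in $\Delta t_i$. The exact solution satisfies
\[
\mathbf{z}_{t_i}=\mathbf{z}_{t_{i-1}}+\int_{t_{i-1}}^{t_i} v_\theta(\mathbf{z}_\tau,\tau)\,d\tau .
\]
The \ours\ step (Algorithm~\ref{alg:PMI}) is
\[
\hat{\mathbf{z}}_{t_i}=\mathbf{z}_{t_{i-1}}+\Delta t_i\,\hat{\mathbf{v}}_{t_{i-1}},
\]
with $\hat{\mathbf{v}}_{t_{i-1}}=\mathbf{v}_{t_{i-1}}-r_{i}\,\nabla F/\|\nabla F\|$. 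As usual for a local error, we start both from the exact value $\mathbf{z}_{t_{i-1}}$. The local error then splits by the triangle inequality into two pieces:
\[
\Bigl\|\int_{t_{i-1}}^{t_i}\bigl(v_\theta(\mathbf{z}_\tau,\tau)-v_\theta(\mathbf{z}_{t_{i-1}},t_{i-1})\bigr)d\tau\Bigr\| \;+\; \Delta t_i\Bigl\|r_i\frac{\nabla F(\mathbf{v}_{t_{i-1}})}{\|\nabla F(\mathbf{v}_{t_{i-1}})\|}\Bigr\|.
\]

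The first term is the standard Euler truncation error. Assumptions~\ref{Assumption 1} and \ref{Assumption 2} give the integrand bound
\[
\|v_\theta(\mathbf{z}_\tau,\tau)-v_\theta(\mathbf{z}_{t_{i-1}},t_{i-1})\|\le C_1\|\mathbf{z}_\tau-\mathbf{z}_{t_{i-1}}\|+C_2|\tau-t_{i-1}|\le (C_1L+C_2)|\tau-t_{i-1}|,
\]
where the second inequality uses the hypothesis $\|\mathbf{z}_{t}-\mathbf{z}_{s}\|\le L|t-s|$. Integrating over the step bounds the first term by $\tfrac12(C_1L+C_2)\Delta t_i^2$.

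The second term is the correction. The gradient direction is normalized, so this term equals exactly $\Delta t_i\, r_i$, and everything depends on how $r_i$ scales. Using Proposition~\ref{prop1}, $r_i=\sqrt{2n+3\sqrt{2n}}\,\Delta t_i/T+\epsilon$. The first part of $r_i$ contributes $\sqrt{2n+3\sqrt{2n}}\,\Delta t_i^2/T=\mathcal{O}(\Delta t_i^2)$. For the $\epsilon$ part, I would invoke the freedom built into the proof of Proposition~\ref{prop1}, where $\epsilon$ is arbitrary and only tied to $\delta=\epsilon/r$ with $s_i<\delta$. I would choose $\epsilon$ to scale with the step, e.g.\ $\epsilon\le c\,\Delta t_i$ (equivalently $\epsilon=\mathcal{O}(r s_i)$, which matches the requirement $rs_i<\epsilon$ up to a constant). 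Then $\Delta t_i\epsilon=\mathcal{O}(\Delta t_i^2)$ and the whole correction term is $\mathcal{O}(\Delta t_i^2)$.

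Combining the two bounds gives local error $\le (\tfrac12(C_1L+C_2)+\sqrt{2n+3\sqrt{2n}}/T+c)\Delta t_i^2$, the same order as forward Euler. The reconstruction direction (from $t_i$ back to $t_{i-1}$) is handled identically with the roles of the endpoints swapped. For the \ourEdit\ variant, if it is also to be covered, note that $\hat{\mathbf{v}}-\mathbf{v}=(1-w)(\bar{\mathbf{v}}^{\mathsf{proj}}-\mathbf{v})$. By Lipschitzness and the bound on $\mathbf{z}$ increments, this difference is controlled by differences of past velocities along the trajectory. Handling it therefore needs either $1-w=\mathcal{O}(\Delta t_i)$ or a bound of that drift by $\mathcal{O}(\Delta t_i)$, which I would state as an additional remark.

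The main obstacle is the $\epsilon$ term. With $\epsilon$ a fixed constant independent of $\Delta t_i$, the correction contributes only $\mathcal{O}(\Delta t_i)$ per step and the claim fails. So the proof must make explicit that $\epsilon$ is chosen proportional to $\Delta t_i$, or is absorbed into the step-size-dependent radius. I would state this scaling as part of the interpretation of Proposition~\ref{prop1}, consistent with its $\varepsilon$–$\delta$ argument.
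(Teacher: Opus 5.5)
Your proof is correct and follows essentially the same route as the paper. Both arguments split the one-step error into the Euler truncation term, bounded via the two Lipschitz assumptions and the $L$-bound, plus a correction term of size exactly $\Delta t_i\, r_i$, and then use $r_i\propto\Delta t_i$ from Proposition~\ref{prop1}. Your version is more careful on two points. First, you write the exact step as $\int_{t_{i-1}}^{t_i}v_\theta(\mathbf{z}_\tau,\tau)\,d\tau$, whereas the paper compares the endpoint velocities $v_\theta(\mathbf{z}_{t_i},t_i)$ and $v_\theta(\hat{\mathbf{z}}_{t_{i-1}},t_{i-1})$ and drops the factor $L$. Second, where the paper simply substitutes $r_i$ ``without $\epsilon$'', you make explicit that $\epsilon=\mathcal{O}(\Delta t_i)$ is needed, since a fixed $\epsilon$ (e.g.\ the $\epsilon=2$ used in the experiments) would only give $\mathcal{O}(\Delta t_i)$.
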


\begin{proof}
We consider the recursive relationship of the one-step error term:
\begin{equation}
\begin{aligned}
\| \mathbf{z}_{t_i} - \mathbf{\hat{z}}_{t_i} \| 
&\le \| \mathbf{z}_{t_{i-1}} - \mathbf{\hat{z}}_{t_{i-1}} \| \\
&+(t_i - t_{i-1}) \big\| v_\theta(\mathbf{z}_{t_i}, t_i) - v_\theta(\mathbf{\hat{z}}_{t_{i-1}}, t_{i-1}) \big\|\\
&+(t_i - t_{i-1})\|r \frac{\nabla F(\mathbf{\hat{v}}_{t_i})}{\|\nabla F(\mathbf{\hat{v}}_{t_i})\|}\|.
\end{aligned}
\end{equation}
First, expand the velocity field differences:
\begin{equation}
\begin{aligned}
&~~~~~\big\| v_\theta(\mathbf{z}_{t_i}, t_i) - v_\theta(\mathbf{\hat{z}}_{t_{i-1}}, t_{i-1}) \big\|\\
&\le\big\| v_\theta(\mathbf{z}_{t_i}, t_i) - v_\theta(\mathbf{z}_{t_i}, t_{i-1}) \big\| \\
&+\big\|v_\theta(\mathbf{z}_{t_i}, t_{i-1}) - v_\theta(\mathbf{\hat{z}}_{t_{i-1}}, t_{i-1}) \big\| \\
&+\|r \|.
\end{aligned}
\end{equation}
From the Lipschitz condition Asm.\ref{Assumption 1} and Asm.\ref{Assumption 2}, we obtain:
\begin{equation}
\begin{aligned}
\big\|v_\theta(\mathbf{z}_{t_i}, t_{i-1}) - v_\theta(\mathbf{\hat{z}}_{t_{i-1}}, t_{i-1}) \big\| &\le C_1 \| \mathbf{z}_{t_i} - \mathbf{\hat{z}}_{t_{i-1}} \|, \\
\big\| v_\theta(\mathbf{z}_{t_i}, t_i) - v_\theta(\mathbf{z}_{t_i}, t_{i-1}) \big\| &\le C_2 \| t_i - t_{i-1} \|, \\
\end{aligned}
\end{equation}
such that
\begin{equation}
\begin{aligned}
&~~~~~\big\| v_\theta(\mathbf{z}_{t_i}, t_i) - v_\theta(\mathbf{\hat{z}}_{t_{i-1}}, t_{i-1}) \big\| \\
&\le C_2 \| t_i - t_{i-1} \| + C_1 \| \mathbf{z}_{t_i} - \mathbf{\hat{z}}_{t_{i-1}} \| + \|r \|  \\
&= C_2 \Delta t_i + C_1 \| \mathbf{z}_{t_i} - \mathbf{\hat{z}}_{t_{i-1}} \|+\|r \| \\
&\le C_2 \Delta t_i + C_1(\| \mathbf{z}_{t_i} - \mathbf{z}_{t_{i-1}} \|+\| \mathbf{z}_{t_{i-1}} - \mathbf{\hat{z}}_{t_{i-1}} \|)+\|r \| \\
&= C_2 \Delta t_i+C_1\Delta t_i+C_1\| \mathbf{z}_{t_{i-1}} - \mathbf{\hat{z}}_{t_{i-1}} \|+\|r \|
\end{aligned}
\end{equation}
We define that $\mathcal{E}= \| \mathbf{z}_{t_{i-1}} - \mathbf{\hat{z}}_{t_{i-1}} \|$, the local error of$\| \mathbf{z}_{t_i} - \mathbf{\hat{z}}_{t_i} \|$ as 
\begin{equation}
    \mathcal{E}^{\mathsf{local}}= (t_i - t_{i-1}) \big\| v_\theta(\mathbf{z}_{t_i}, t_i) - v_\theta(\mathbf{\hat{z}}_{t_{i-1}}, t_{i-1}) \big\|,
\end{equation}
and substitute them back into the original inequality:
\begin{equation}
\begin{aligned}
\mathcal{E}^{\mathsf{local}}\le  
(C_1 + C_2) \Delta t_i^2 + C_1 \mathcal{E} \Delta t_i
+ \|r \| \Delta t_i.
\end{aligned}
\end{equation}
We can avoid the accumulated error $\mathcal{E}$, then we have the local error:
\begin{equation}
\begin{aligned}
\mathcal{E}^{\mathsf{local}}\le  
(C_1 + C_2) \Delta t_i^2 
+ \|r \| \Delta t_i.
\end{aligned}
\end{equation}
Now we substitute the $r$ in Prop.\ref{prop2} without $\epsilon$, we have
\begin{equation}
    \begin{aligned}
        \mathcal{E}^{\mathsf{local}}&\le  
        (C_1 + C_2) \Delta t_i^2 
        + \|\sqrt{2n+3\sqrt{2n}}\frac{\Delta t_i}{T} \| \Delta t_i \\
        &=(C_1 + C_2+\frac{\sqrt{2n+3\sqrt{2n}}}{T})\Delta t_i^2\\
        &= \mathcal{O}(\Delta t_i^2).
    \end{aligned}
\end{equation}
\end{proof}

The Proposition~\ref{prop2} indicates that our method will not increase the error compared to the baseline.

\subsection{Analysis on the Expansion Order of $F(\mathbf{v})$}
\label{subsec:expansion_order}
To better understand the effect of using higher-order approximations in PMI, we analyze the Taylor
expansion of the proximal objective $F(\mathbf{v})$ around the current velocity $\mathbf{v}_{t_k}$.

\noindent\textbf{Second-order expansion.}
The second-order Taylor expansion of $F(\mathbf{v})$ around $\mathbf{v}_{t_k}$ is given by:
\begin{equation}
    F(\mathbf{v})
    \approx
    F(\mathbf{v}_{t_k})
    +
    \nabla F(\mathbf{v}_{t_k})^\top (\mathbf{v}-\mathbf{v}_{t_k})
    +
    \frac{1}{2}
    (\mathbf{v}-\mathbf{v}_{t_k})^\top
    \nabla^2 F(\mathbf{v}_{t_k})
    (\mathbf{v}-\mathbf{v}_{t_k}).
\end{equation}

Following the constrained formulation in Eq.~\ref{argmin}, we obtain:
\begin{equation}\label{eq:sec_arg}
    \mathop{\arg\min}_{\|\mathbf{d}\|_2^2 = 1}
    \quad
    \nabla F(\mathbf{v}_{t_k})^\top (\mathbf{v}-\mathbf{v}_{t_k})
    +
    \frac{1}{2}
    (\mathbf{v}-\mathbf{v}_{t_k})^\top
    \nabla^2 F(\mathbf{v}_{t_k})
    (\mathbf{v}-\mathbf{v}_{t_k}).
\end{equation}

Since $F$ consists of an $\ell_1$ regularization term and a quadratic term, its Hessian satisfies
\begin{equation}
    \nabla^2 F(\mathbf{v}_{t_k})
    =
    \frac{1}{\lambda}\mathbf{I},
\end{equation}
where the $\ell_1$ term contributes zero curvature almost everywhere.  
Plugging this structure into Eq.~\ref{eq:sec_arg}, we arrive at:
\begin{equation}
    \mathop{\arg\min}_{\|\mathbf{d}\|_2^2 = 1}~
    \nabla F(\mathbf{v}_{t_k})^\top (\mathbf{v}-\mathbf{v}_{t_k})
    +
    \frac{1}{2\lambda}\|\mathbf{v}-\mathbf{v}_{t_k}\|_2^2.
\end{equation}

Using the spherical constraint reparameterization
\begin{equation}
    \mathbf{v} = \mathbf{v}_{t_k} + r\mathbf{d},\qquad \|\mathbf{d}\|_2 = 1,
\end{equation}
we obtain
\begin{equation}
    \mathop{\arg\min}_{\|\mathbf{d}\|_2^2 = 1}~
    r\, \nabla F(\mathbf{v}_{t_k})^\top \mathbf{d}
    +
    \frac{r^2}{2\lambda}.
\end{equation}

Since the second term is constant w.r.t.\ $\mathbf{d}$, the optimization reduces to
\begin{equation}
    \mathop{\arg\min}_{\|\mathbf{d}\|_2^2 = 1}~
    r\,\nabla F(\mathbf{v}_{t_k})^\top \mathbf{d},
\end{equation}
which is identical to Eq.~\ref{eq:A3}.  
This shows that the optimal direction of the update remains unchanged even when second-order
information is included.
Thus, the~\ours~update rule under second-order expansion is equivalent to the first-order case.

\noindent\textbf{Higher-order expansions.}
For $n\ge 3$, the Taylor expansion of $F$ contains higher-order derivatives:
\begin{equation}
    \nabla^n F(\mathbf{v}_{t_k}).
\end{equation}
However, $F(\mathbf{v})$ is composed of an $\ell_1$ term (piecewise linear) and a quadratic term
(constant Hessian).  
Therefore,
\begin{equation}
    \nabla^n F(\mathbf{v}_{t_k}) = \mathbf{0},\qquad \forall n\ge 3.
\end{equation}
This implies that all higher-order Taylor terms vanish identically. Consequently, third-order and
higher-order expansions produce the same optimization objective as the second-order case, and thus
yield exactly the same closed-form update direction.

\section{Experiments}\label{App:B}
\setcounter{figure}{0} 
\setcounter{table}{0} 
\renewcommand{\thesubsection}{B.\arabic{subsection}}
\renewcommand{\thefigure}{B.\arabic{figure}}
\renewcommand{\thetable}{B.\arabic{table}}

\subsection{Analysis Experiment of $\lambda$ on Reconstruction} \label{B.1}

\begin{figure}[t]
    \centering
    \includegraphics[width=1\linewidth]{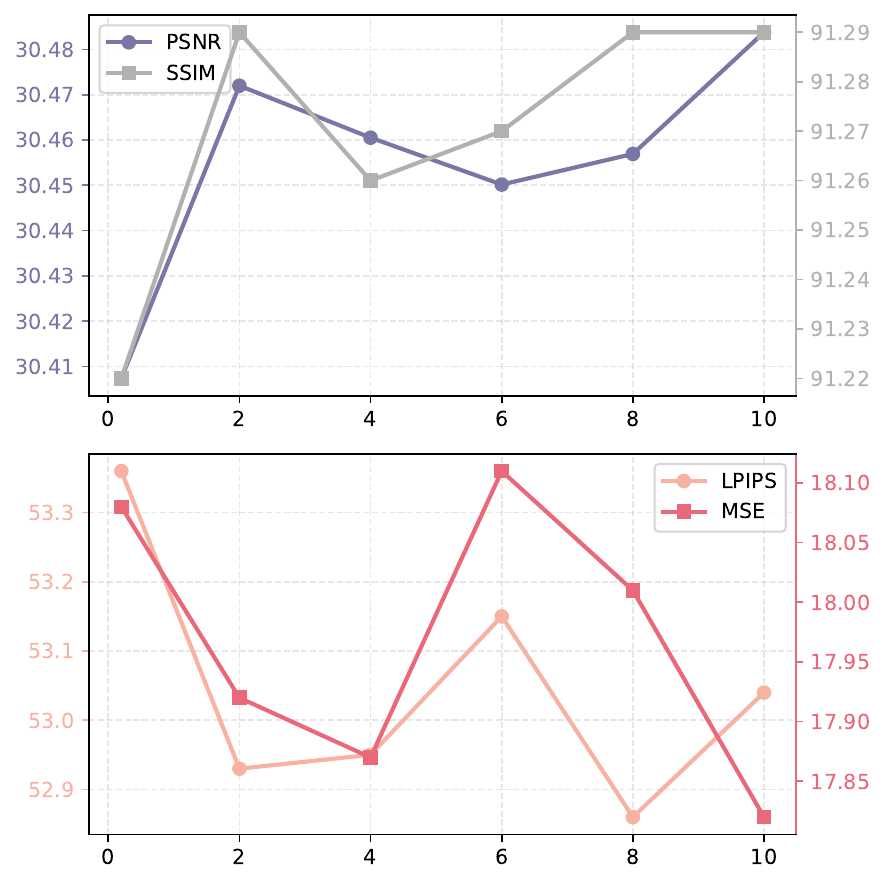}
    \caption{Analysis experiment on the effect of the proximal operator parameter $\lambda$.
Increasing $\lambda$ leads to improvements in PSNR and SSIM, while excessively large values result in overcorrection, compromising editing quality.}
    \label{fig:ablationl-recon}
\end{figure}

To investigate the optimal choice of $\lambda$ for the inversion and reconstruction task, we conduct an analysis experiment under the conditional case. We aim to select a $\lambda$ value that yields higher PSNR and SSIM while achieving lower LPIPS and MSE. As shown in Fig.~\ref{fig:ablationl-recon}, the choice of the trade-off factor $\lambda$ significantly influences the performance on the inversion and reconstruction task. As $\lambda$ increases, both PSNR and SSIM generally improve, with $\lambda = 10$ achieving the highest values. Meanwhile, LPIPS and MSE exhibit a decreasing trend, and $\lambda = 10$ yields the lowest values across both metrics. These results indicate that $\lambda = 10$ provides the best balance between preserving perceptual quality and ensuring pixel-level fidelity.

\subsection{Analysis Experiment of $\lambda$ on Editing}\label{B.2}
\begin{figure*}[th]
    \centering
    \includegraphics[width=1\linewidth]{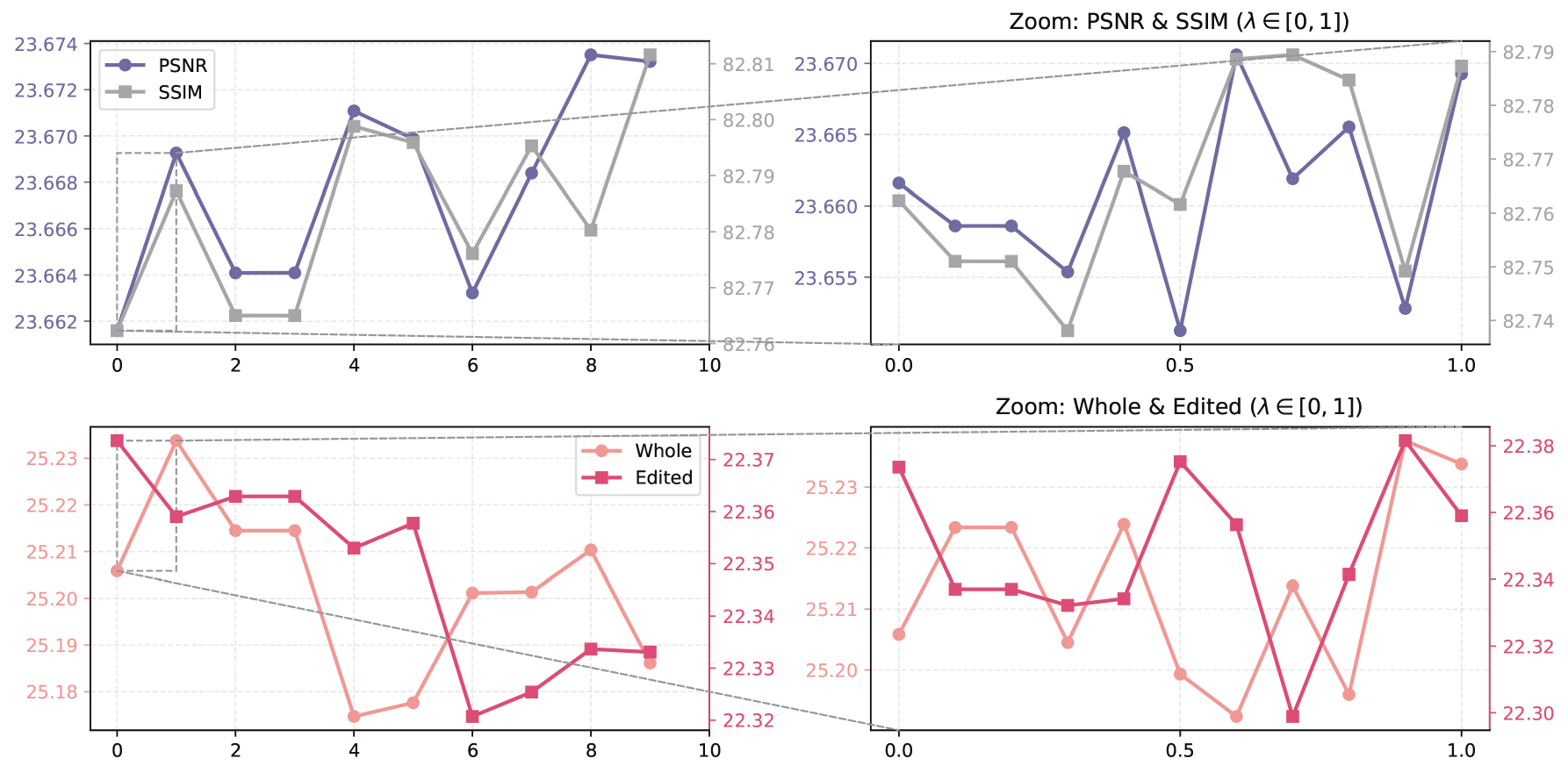}
    \caption{Analysis experiment on the effect of the proximal operator parameter $\lambda$ with the solver \textbf{FireFlow}.
Increasing $\lambda$ leads to improvements in PSNR and SSIM, while excessively large values result in overcorrection, compromising editing quality.}
    \label{fig:ablationl}
\end{figure*}

\begin{figure*}[th]
    \centering
    \includegraphics[width=1\linewidth]{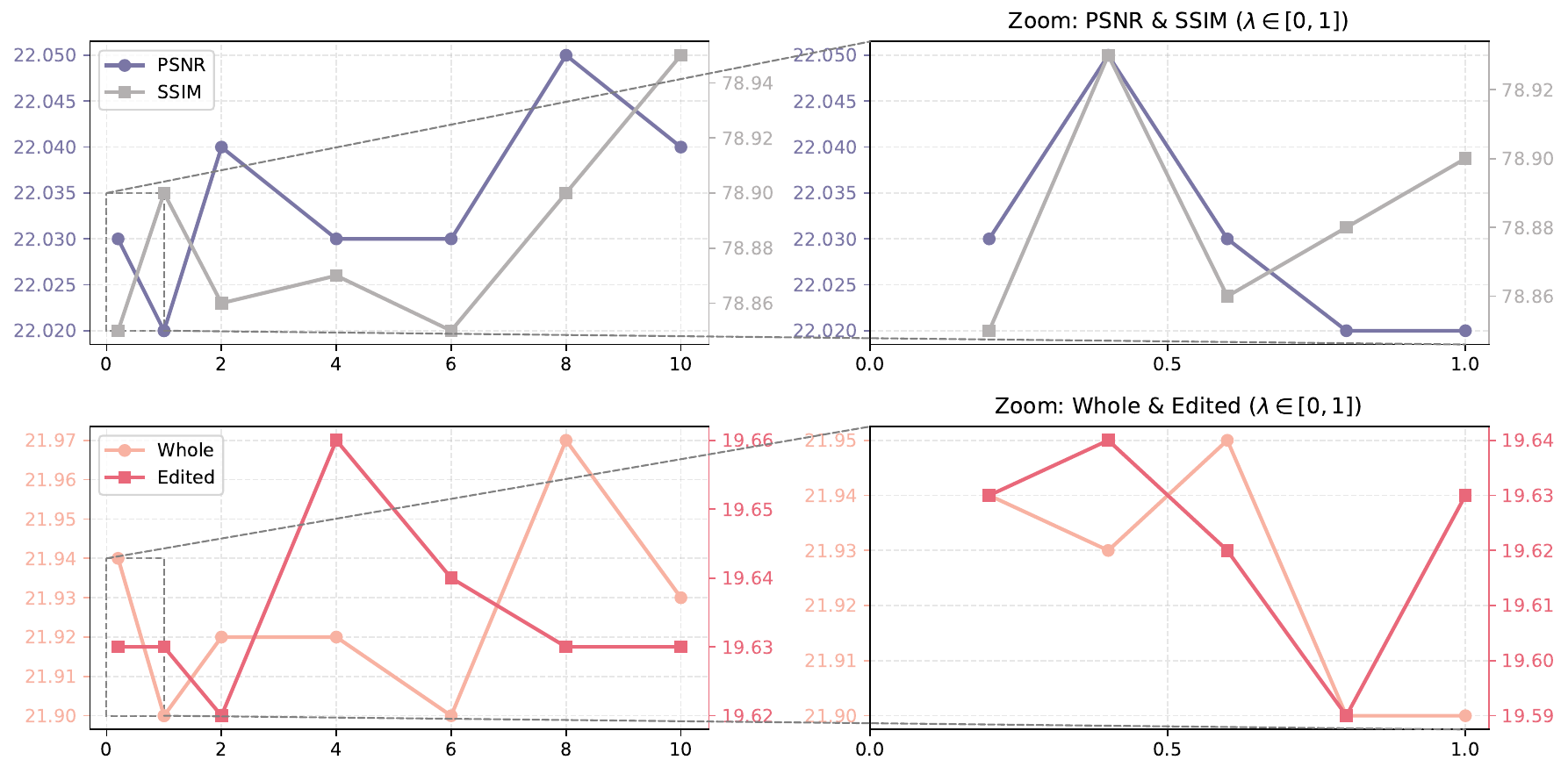}
    \caption{Analysis experiment on the effect of the proximal operator parameter $\lambda$ with the solver \textbf{Euler}.}
    \label{fig:ablationl2}
\end{figure*}

\begin{figure*}[th]
    \centering
    \includegraphics[width=1\linewidth]{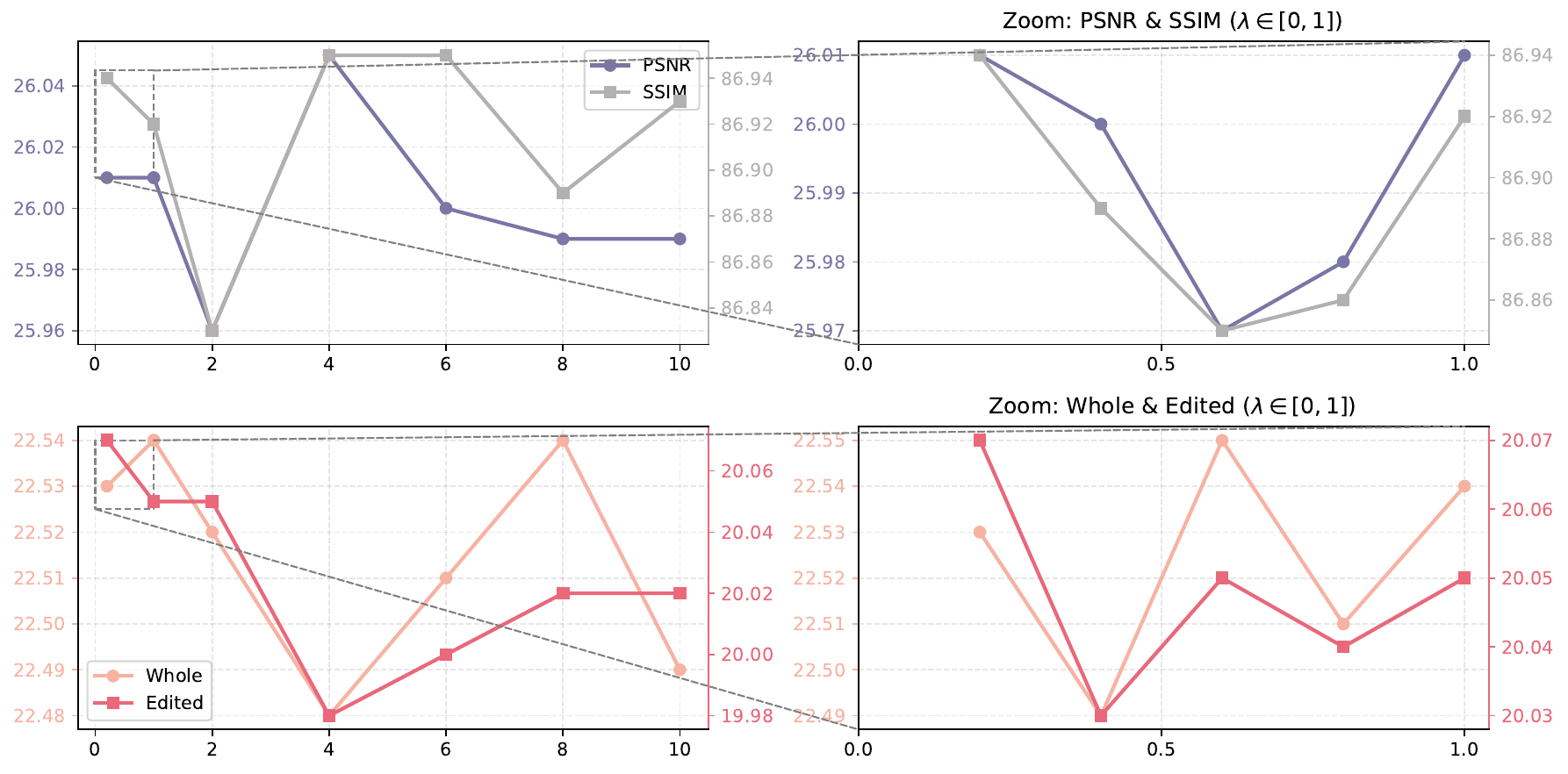}
    \caption{Analysis experiment on the effect of the proximal operator parameter $\lambda$ with the solver \textbf{Heun}.}
    \label{fig:ablationl3}
\end{figure*}

\begin{figure*}[th]
    \centering
    \includegraphics[width=1\linewidth]{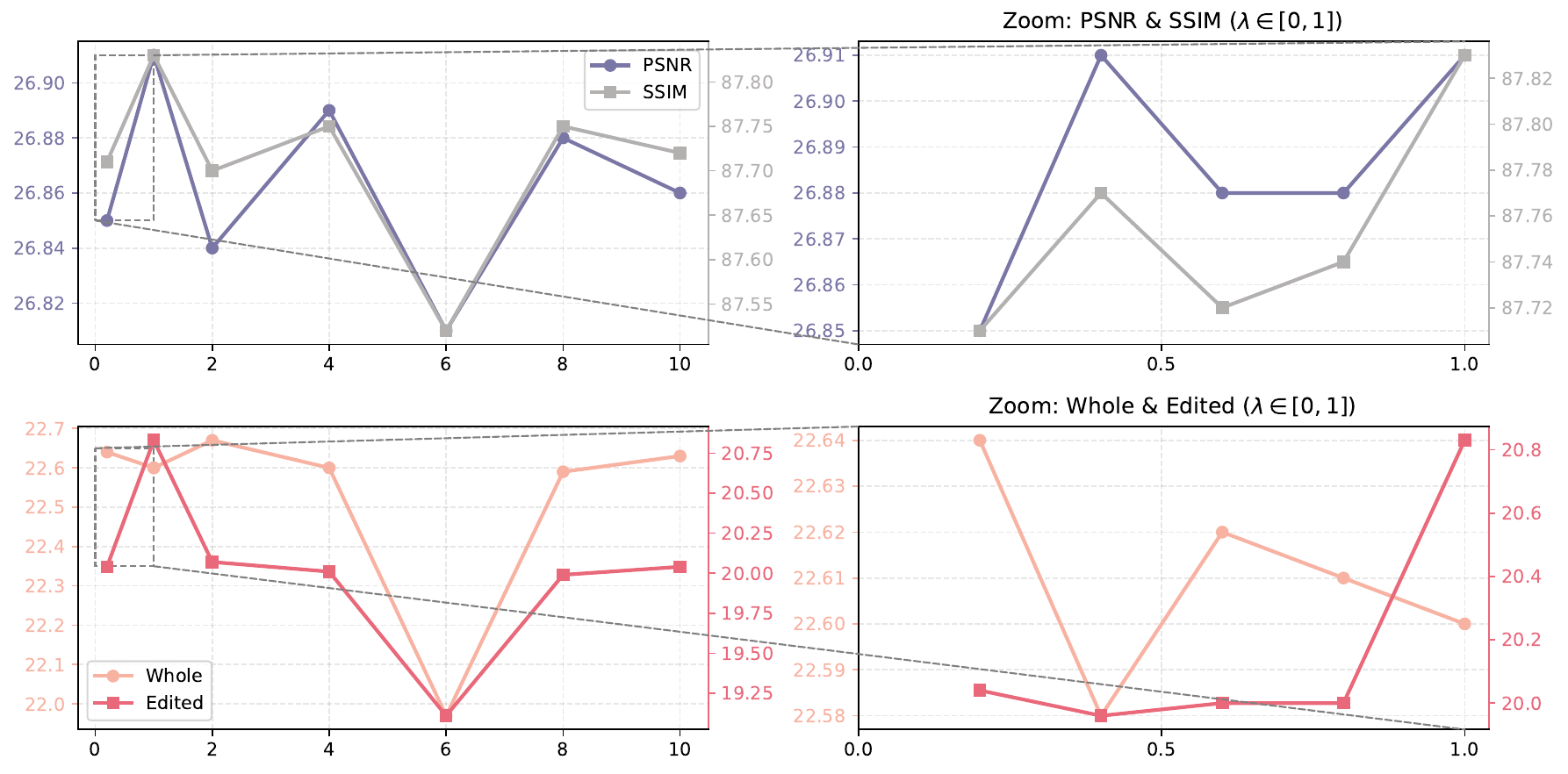}
    \caption{Analysis experiment on the effect of the proximal operator parameter $\lambda$ with the solver \textbf{RF-Solver}.}
    \label{fig:ablationl4}
\end{figure*}

We study the effect of the hyperparameter $\lambda$ in the proximal operator for editing tasks. As illustrated in Fig.~\ref{fig:ablationl}, we vary $\lambda$ from 0 to 1 and report PSNR, SSIM, and CLIP similarity (Whole and Edited).
As $\lambda$ increases, both PSNR and SSIM exhibit slight improvements, indicating that stronger proximal correction enhances image fidelity. However, the CLIP similarity results suggest that overly large $\lambda$ can degrade editing performance: Whole similarity drops gradually, and Edited similarity becomes less stable, reflecting weakened editability and overcorrection. These results demonstrate a trade-off between reconstruction accuracy and editing effectiveness. We observe that $\lambda = 1$ offers a good balance and adopt it as the default setting for editing tasks for FireFlow.

We also study the effect of the hyperparameter $\lambda$ on the performance of the~\ours~solver when used with other solvers, including Euler, Heun, and RF-Solver. As shown in Fig.~\ref{fig:ablationl2}, Fig.~\ref{fig:ablationl3}, and  Fig.~\ref{fig:ablationl4}, the optimal $\lambda$ for Euler, Heun, and RF-Solver are $\lambda=8$, $\lambda=1$, and $\lambda=1$, respectively.

\subsection{Analysis Experiment of $w$ on Editing} \label{B.2.5}
\begin{figure*}[th]
    \centering
    \includegraphics[width=1\linewidth]{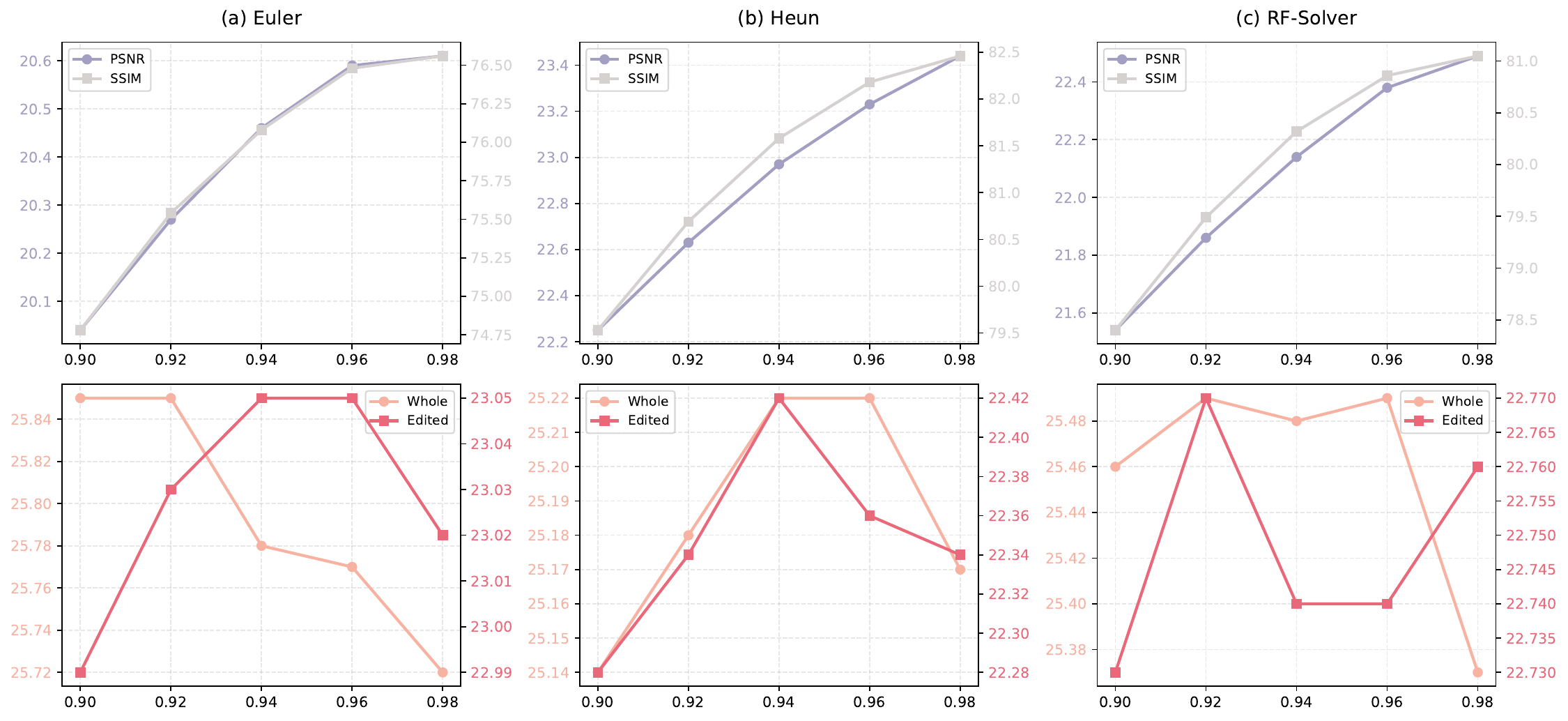}
    \caption{Analysis experiment on the effect of the~\ourEdit~parameter $w$ with three different solvers, including Euler, Heun, and RF-Solver.}
    \label{fig:ablation_w3}
\end{figure*}
We demonstrate the result of the analysis experiment of the parameter
$w$ in our proposed~\ourEdit~method. As shown in Fig.~\ref{fig:ablation_w3}, the result indicates that $w = 0.94$ provides the best balance between background preservation and editing quality for Euler and Heun, $w = 0.92$ provides the best balance between background preservation and editing quality for RF-Solver

\subsection{Analysis Experiment of $w$ on Stable Diffusion-3.5} \label{B.2.6}
\begin{figure}[th]
    \centering
    \includegraphics[width=1\linewidth]{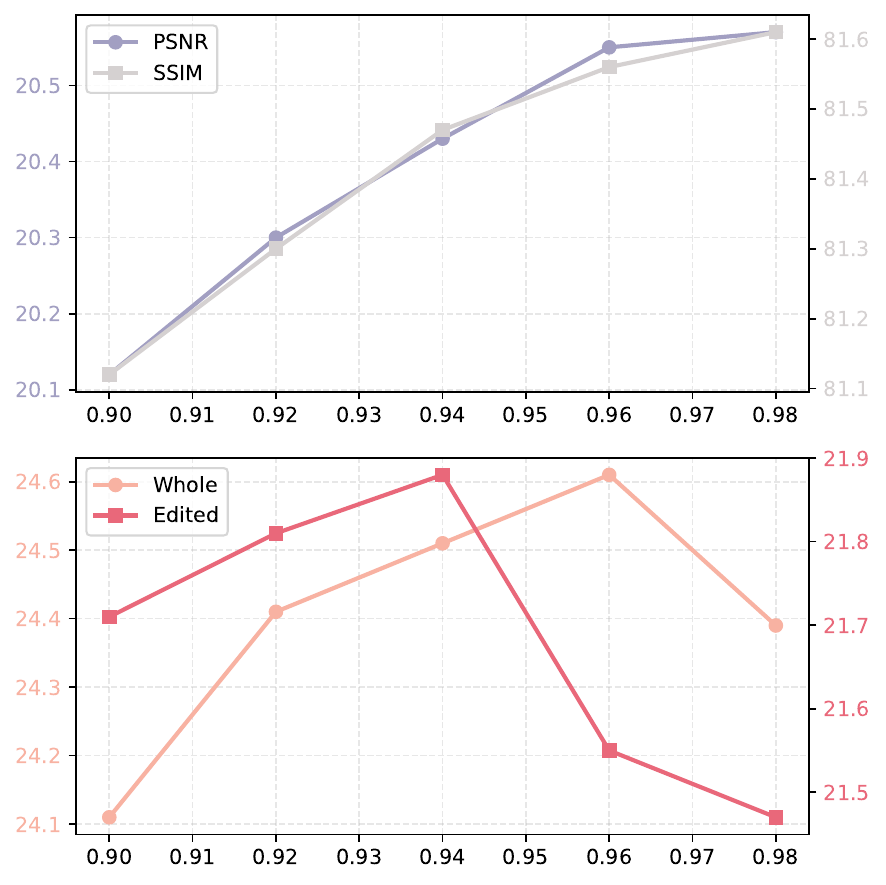}
    \caption{Analysis experiment on the effect of the~\ourEdit~parameter $w$ on \textbf{Stable Diffusion-3.5} with solver FireFlow.}
    \label{fig:ablation_w_sd35}
\end{figure}
We demonstrate the result of the analysis experiment of the parameter
$w$ in our proposed~\ourEdit~method. As shown in Fig.~\ref{fig:ablation_w_sd35}, the result indicates that $w = 0.94$ provides the best balance between background preservation and editing quality

\subsection{Analysis Experiment of $\lambda$ on Stable-Diffusion3.5} \label{B.2.7}
\begin{figure}[th]
    \centering
    \includegraphics[width=1\linewidth]{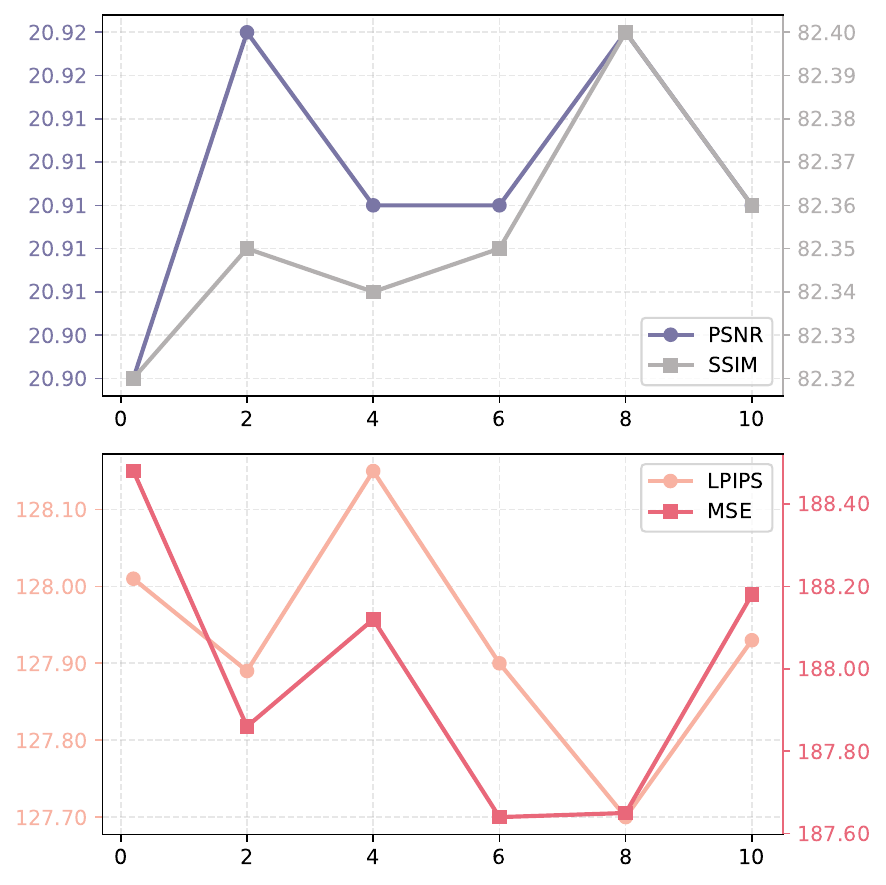}
    \caption{Analysis experiment on the effect of the~\ours~parameter $\lambda$ on \textbf{Stable Diffusion-3.5} with solver FireFlow~(Reconstruction).}
    \label{fig:ablation_l_sd35_r}
\end{figure}
\begin{figure*}[th]
    \centering
    \includegraphics[width=1\linewidth]{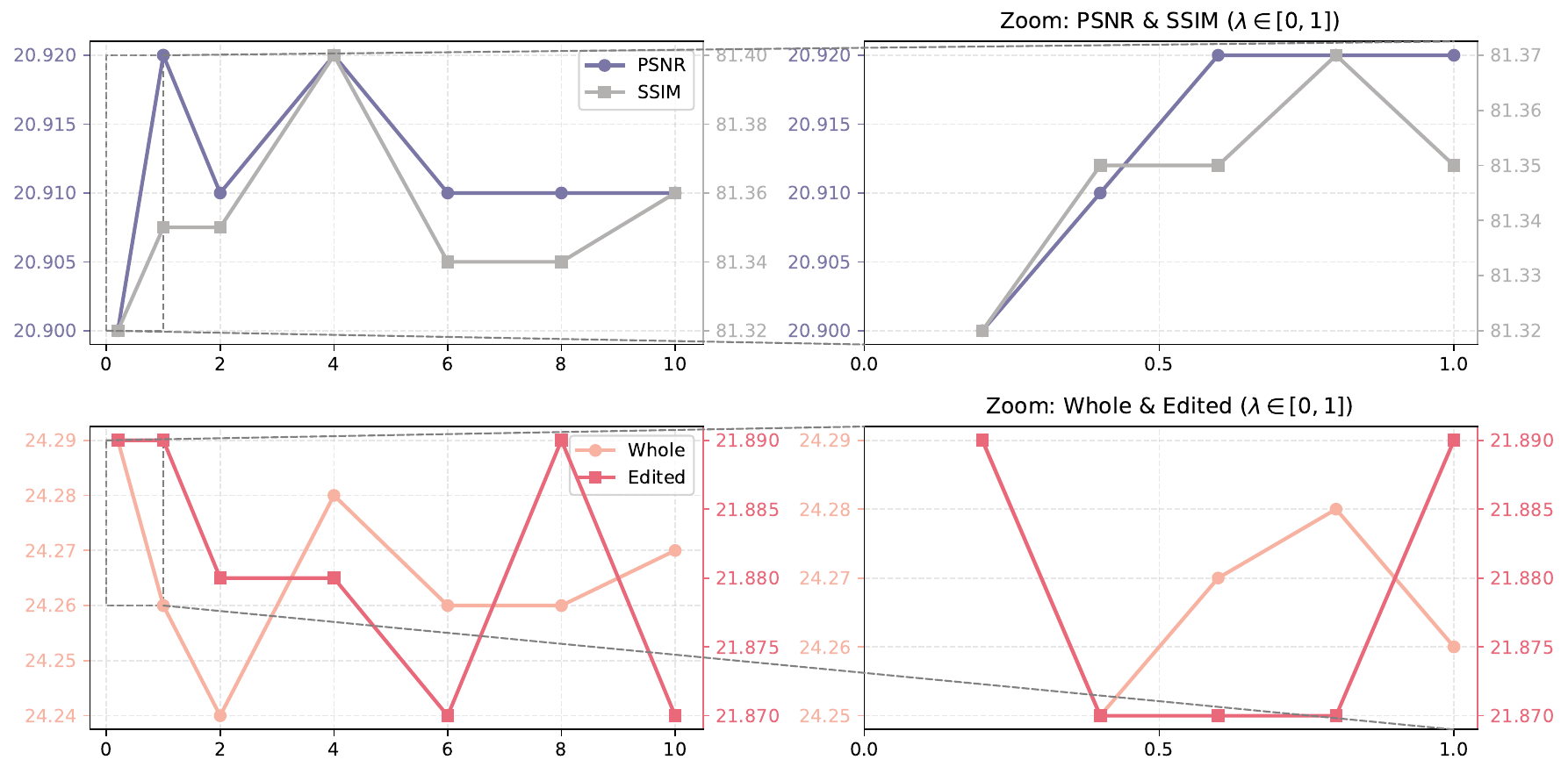}
    \caption{Analysis experiment on the effect of the~\ours~parameter $\lambda$ on \textbf{Stable Diffusion-3.5} with solver FireFlow~(Editing).}
    \label{fig:ablation_l_sd35_e}
\end{figure*}
We demonstrate the result of the analysis experiment of the parameter
$\lambda$ in our proposed~\ours~method. As shown in Fig.~\ref{fig:ablation_l_sd35_r} and Fig.~\ref{fig:ablation_l_sd35_e} the result indicates that $\lambda = 2$ provides the best performance on reconstruction task, and $\lambda = 1$ provides the best performance on editing task.

\subsection{Analysis Experiment of $\epsilon$} \label{B.2.3}
\begin{figure*}[th]
    \centering
    \includegraphics[width=1\linewidth]{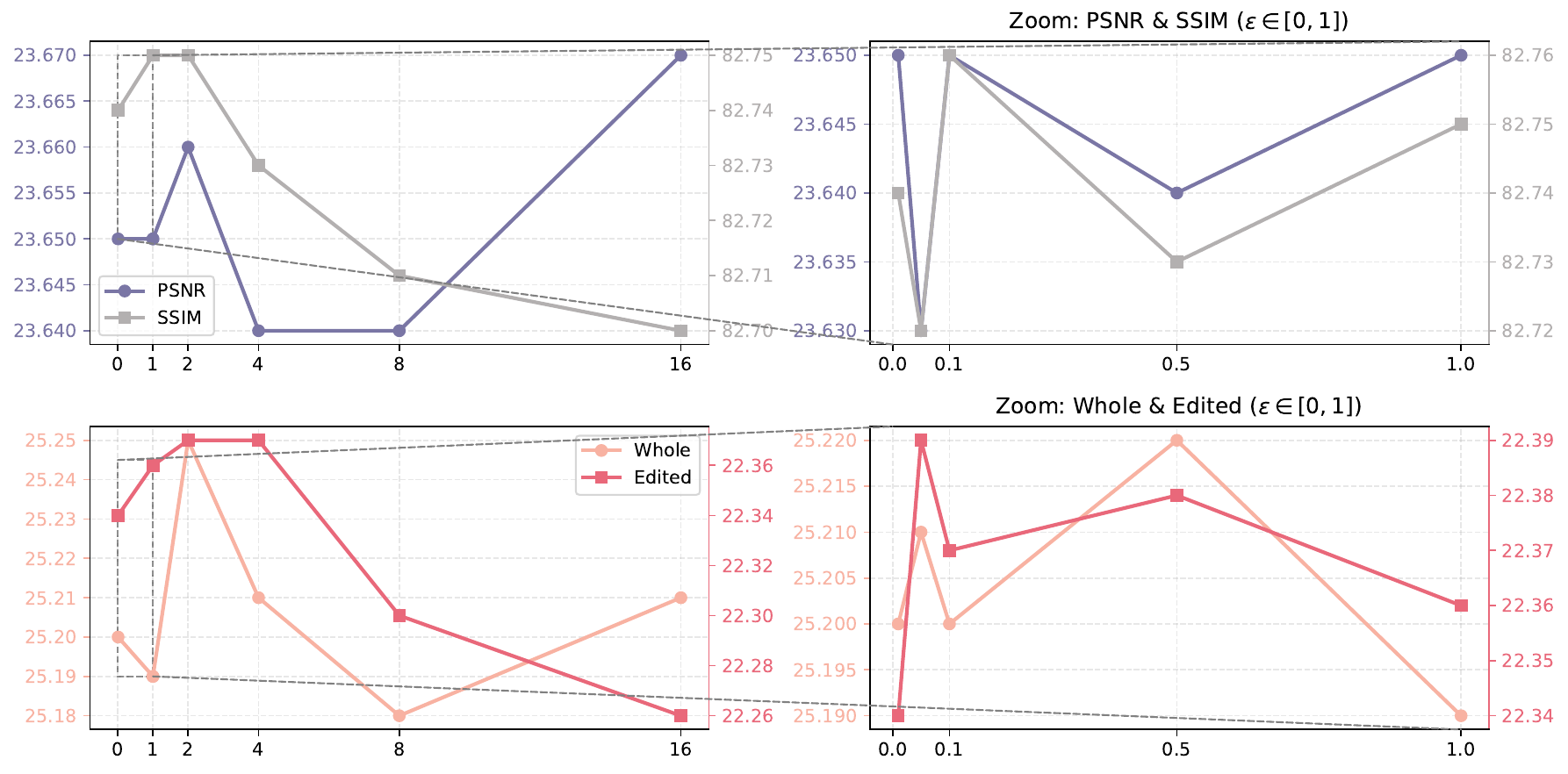}
    \caption{Analysis experiment on the effect of the~\ours~parameter $\epsilon$ on FireFlow.}
    \label{fig:ablation_e1}
\end{figure*}
We demonstrate the result of the analysis experiment of the parameter
$\epsilon$ in our proposed~\ours~method. As shown in Fig.~\ref{fig:ablation_e1}, the result indicates that $\epsilon = 2$ provides the best balance between background preservation and editing quality.

\subsection{Different Velocity Averaging Scheme}
\label{app:ema}
\begin{table*}[t!]
    \fontsize{9}{10}\selectfont
    \setlength{\tabcolsep}{8pt}
    \centering
        \caption{Comparison on different velocity computing scheme. The results show that~\ours~+~\ourEdit~achieve better results across all metrics in the editing task, especially in terms of CLIP similarity. Best results are in bold.}
    \resizebox{\linewidth}{!}{
    \begin{tabular}{l|c|cccc|cc|c}
        \toprule
        \multirow{2}{*}{Method} & \multirow{1}{*}{Structure} & \multicolumn{4}{c|}{Background Preservation} & \multicolumn{2}{c|}{CLIP Similarity}  & \multirow{2}{*}{$\mathrm{NFE}\downarrow$}\\
        & $\mathrm{Distance}_{\times10^3}^{\downarrow}$ & $\mathrm{PSNR}\uparrow$ &  $\mathrm{LPIPS}_{\times10^3}^{\downarrow}$ & $\mathrm{MSE}_{\times{10^4}}^{\downarrow}$ & $\mathrm{SSIM}_{\times10^2}^{\uparrow}$ & $\mathrm{Whole}\uparrow$ & $\mathrm{Edited}\uparrow$  & \\
        \midrule
        FireFlow & 29.39 & 22.40 & 136.49 & 79.35 & 80.78 & 25.35 & 22.52 & 18 \\
        \quad +~EMA & 28.82 & 22.74 & 129.36 & 78.59 & \textbf{82.97} & 25.39 & 22.54 & 18 \\
        \quad +~Ours   & \textbf{28.01} & \textbf{22.79} & \textbf{128.55} & \textbf{78.02} & 81.76 & \textbf{25.50} & \textbf{22.68} & 18\\
        \bottomrule
    \end{tabular}
    }
    \label{tab:ema}
\end{table*}

To further stabilize the velocity accumulation used in our trajectory correction module,
we introduce an Exponential Moving Average (EMA) formulation as an alternative to the
simple summation-based mean velocity adopted in the Eq.~\eqref{8}.

The original scheme accumulates the velocity contribution across steps as
\[
\mathrm{\bar{v}}_{t_k} = \mathrm{\bar{v}}_{t_{k-1}}
    +\Delta t_k\mathrm{v}_{t_{k}},
\]
which may propagate noise due to the unweighted influence of all past increments.
To obtain a smoother estimate of the running velocity, we adopt EMA, which gradually
down-weights older increments via exponential smoothing.

Let $\mathrm{u}_{t_k} = \Delta t_k\mathrm{v}_{t_{k}}$ denote the velocity increment at
step $k$. The EMA update is defined as
\[
\mathrm{v}^{\mathrm{ema}}_{t_k}
= \alpha\, u^{k}
+ (1-\alpha)\, \mathrm{v}^{\mathrm{ema}}_{t_k},
\]
where $\alpha \in (0,1)$ controls the smoothing strength.

We compare EMA with the original integral-based averaging strategy in Table~\ref{tab:ema}.
Without any parameter tuning, EMA already achieves improvements over the FireFlow baseline,
indicating that smoothing-based averaging can be beneficial even with minimal design effort.
Our final method, which uses the proposed integral-based correction, achieves the best results
across most metrics. This suggests that our velocity formulation provides more effective control
over trajectory stability. We leave a more systematic investigation of EMA and additional
averaging strategies for future work.

\subsection{Comparison on Different Editing Method}\label{B.3}

\begin{table*}[t!]
    \fontsize{9}{10}\selectfont
    \setlength{\tabcolsep}{8pt}
    \centering
        \caption{Comparison on different editing method. The results show that~\ourEdit~outperforms~\ours~across all metrics in the editing task, especially in terms of CLIP similarity. Best results are in bold.}
    \resizebox{\linewidth}{!}{
    \begin{tabular}{l|c|cccc|cc|c|c}
        \toprule
        \multirow{2}{*}{Method} & \multirow{1}{*}{Structure} & \multicolumn{4}{c|}{Background Preservation} & \multicolumn{2}{c|}{CLIP Similarity}  & \multirow{2}{*}{$\mathrm{NFE}\downarrow$} & \multirow{2}{*}{$\mathrm{Time(s)}$}\\
        & $\mathrm{Distance}_{\times10^3}^{\downarrow}$ & $\mathrm{PSNR}\uparrow$ &  $\mathrm{LPIPS}_{\times10^3}^{\downarrow}$ & $\mathrm{MSE}_{\times{10^4}}^{\downarrow}$ & $\mathrm{SSIM}_{\times10^2}^{\uparrow}$ & $\mathrm{Whole}\uparrow$ & $\mathrm{Edited}\uparrow$  & \\
        \midrule
        baseline & 29.39 & 22.40 & 136.49 & 79.35 & 80.78 & 25.35 & 22.52 & 18 & 5.1\\
        \quad +~\ours & 28.54 & 22.74 & 130.80 & 80.09 & 81.50 & 25.34 & 22.57 & 18 & 5.2\\
        \quad +~\ourEdit   & \textbf{28.01} & \textbf{22.79} & \textbf{128.55} & \textbf{78.02} & \textbf{81.76} & \textbf{25.50} & \textbf{22.68} & 18 & 5.2\\
        \bottomrule
    \end{tabular}
    }

    \label{tab:cmp-method}
\end{table*}

Tab.~\ref{tab:cmp-method} presents the comparison results between~\ourEdit~and~\ours~in the editing process. The results show that while~\ourEdit~slightly outperforms~\ours~in terms of PSNR and SSIM, it significantly surpasses~\ours~in both CLIP Similarity scores, demonstrating its superior effectiveness in enhancing editing performance.

\subsection{Quantitative Comparison on Editing} \label{sec:cmp-lnfe}
\begin{table*}[t!]
    \fontsize{9}{10}\selectfont
    \setlength{\tabcolsep}{9pt}
    \centering
        \caption{Image editing comparison on PIE-Bench with different NFE. The results show that our method brings baseline metrics closer to those of high NFE baselines while reducing NFE, effectively accelerating the model. Best results are in bold and identical results are underlined.}
    \resizebox{\linewidth}{!}{
    \begin{tabular}{l|c|cccc|cc|c}
        \toprule
        \multirow{2}{*}{Method} & \multirow{1}{*}{Structure} & \multicolumn{4}{c|}{Background Preservation} & \multicolumn{2}{c|}{CLIP Similarity}  & \multirow{2}{*}{$\mathrm{NFE}\downarrow$} \\

        & $\mathrm{Distance}_{\times10^3}^{\downarrow}$ & $\mathrm{PSNR}\uparrow$ &  $\mathrm{LPIPS}_{\times10^3}^{\downarrow}$ & $\mathrm{MSE}_{\times{10^4}}^{\downarrow}$ & $\mathrm{SSIM}_{\times10^2}^{\uparrow}$ & $\mathrm{Whole}\uparrow$ & $\mathrm{Edited}\uparrow$  & \\
    
        \midrule
        Euler & \textbf{50.08} & \textbf{20.57} & \textbf{187.47} & 135.03 & 76.02 & 25.73 & \textbf{23.00} & 50 \\ 
        \quad + Ours & 52.09 & 20.36 & 193.21 & \textbf{121.59} & \textbf{76.11} & \textbf{25.89} & 22.79  & 40 \\ 
        \midrule
        Heun & 33.27 & 22.52 & 142.17 & 93.71 & 80.99 & \textbf{25.34} & 22.27  & 60 \\ 
        \quad + Ours & \textbf{28.21} & \textbf{23.01} & \textbf{132.49} & \textbf{71.83} & \textbf{81.64} & 25.22 & \textbf{22.37}  & 48 \\ 
        \midrule
        RF-Inversion & \textbf{63.41} & \underline{18.03} & 252.89 & \underline{21.44} & 62.65 & 25.02 & 22.56  & 56 \\ 
        \quad + Ours & 64.34 & \underline{18.03} & \textbf{252.85} & \underline{21.44} & \textbf{62.94} & \textbf{25.38} & \textbf{22.76}  & 52 \\ 
        \midrule
        RF-Solver & 33.71 &21.29 &156.51 &96.24 &79.63 &25.29 &22.61 & 60 \\
        \quad + Ours & \textbf{32.13} &\textbf{22.35} &\textbf{146.40} &\textbf{81.45} &\textbf{80.46} &\textbf{25.44} &\textbf{22.64}  & 52 \\
        \midrule
        FireFlow & 34.30 & 21.91 & 147.68 & \textbf{96.71} & 80.16 & 25.56 & 22.67  & 26 \\
        \quad + Ours & \textbf{31.87} & \textbf{22.31} & \textbf{140.63} & 98.27 & \textbf{80.84} & \textbf{25.71} & \textbf{22.98} & 22 \\
        \bottomrule
    \end{tabular}
    }

    \label{tab:edit-cmp-diff}
\end{table*}

\begin{table}[t!]
    \fontsize{9}{10}\selectfont
    \setlength{\tabcolsep}{9pt}
    \centering
    \caption{Editing performance on SDXL-Turbo. Our proximal correction significantly improves PSNR and SSIM while achieving the best CLIP similarity on both whole and edited regions.}
    \resizebox{\linewidth}{!}{
    \begin{tabular}{l|cccc|cc}
        \toprule
        \multirow{2}{*}{Method}
        & \multicolumn{4}{c|}{Background Preservation}
        & \multicolumn{2}{c}{CLIP Similarity} \\
        & $\mathrm{PSNR}\uparrow$
        & $\mathrm{LPIPS}_{\times10^3}\downarrow$
        & $\mathrm{MSE}_{\times10^4}\downarrow$
        & $\mathrm{SSIM}_{\times10^2}\uparrow$
        & $\mathrm{Whole}\uparrow$
        & $\mathrm{Edited}\uparrow$ \\
        \midrule
        GNRI         & 21.96 & 126.63 & 93.27 & 75.00 & 24.90 & 21.98 \\
        Turbo-Edit   & 22.26 & 107.30 & 99.96 & 80.05 & 25.85 & 22.74 \\
        Direct Inv   & 22.46 & 105.53 & 79.96 & 80.22 & 25.64 & 22.68 \\
        \midrule
        DDIM         & 22.00 & 118.56 & 90.16 & 76.00 & 24.38 & 21.48 \\
        DDIM +~Ours 
                    & \textbf{22.49}$_{\textcolor{red}{0.49\uparrow}}$
                    & \textbf{105.17} $_{\textcolor{red}{13.9\downarrow}}$
                    & \textbf{84.01} \raisebox{-0.3ex}{\textcolor{red}{\scriptsize 6.15$\downarrow$}}
                    & \textbf{80.27} $_{\textcolor{red}{4.27\uparrow}}$
                    & \textbf{25.62}$_{\textcolor{red}{1.24\uparrow}}$
                    & \textbf{22.70} $_{\textcolor{red}{1.22\uparrow}}$\\
        \bottomrule
    \end{tabular}
    }
    \label{tab:sdxl-edit}
\end{table}

Tab.~\ref{tab:edit-cmp-diff} demonstrates the comparison results on PIE-Bench with different NFE. We can observe that our method achieves results comparable to the baseline at higher NFE, even with significantly lower NFE.
For example, Heun combined with our method achieves better structural consistency with fewer steps and lower NFE. Similarly, FireFlow benefits from our correction, showing higher PSNR and better editing fidelity with reduced computational cost. Overall, our approach achieves a more balanced trade-off between background preservation and editing quality while reducing the required sampling steps.

To verify the generalization capability of our method, we applied our proximal correction to a widely adopted diffusion model architecture. As shown in Tab.~\ref{tab:sdxl-edit}, we incorporated our method into the DDIM sampler on the SDXL-Turbo. For comprehensive comparison, we also experimented with several state-of-the-art methods on SDXL-Turbo, including DDIM~\cite{song2022denoisingdiffusionimplicitmodels}, GNRI~\cite{GNRI} , Turbo-Edit~\cite{wu2024turboedit}, and Direct Inv~\cite{PnP}. The results demonstrate that our method can be effectively applied to diffusion-based models.

\begin{table}[t]
\centering
\caption{Reconstruction performance of SD3.5 with FireFlow.}
\begin{tabular}{l|cccc}
\toprule
Method & PSNR$\uparrow$ & LPIPS$_{\times10^3}\downarrow$ & MSE$_{\times10^4}\downarrow$ & SSIM$_{\times10^2}\uparrow $\\
\midrule
FireFlow & 20.91 & 128.16 & 189.79 & 82.32 \\
FireFlow + Ours     & \textbf{20.92} & \textbf{127.89} & \textbf{187.86} & \textbf{82.35} \\
\bottomrule
\end{tabular}\label{sd3_edit}
\end{table}

\begin{table}[t]
\centering
\caption{Editing performance of SD3.5 with FireFlow.}
\begin{tabular}{l|cccc|cc}
\toprule
Method & PSNR$\uparrow$ & LPIPS$_{\times10^3}\downarrow$ & MSE$_{\times10^4}\downarrow$ & SSIM$_{\times10^2}\uparrow$ & Whole$\uparrow$ & Edited$\uparrow$ \\
\midrule
FireFlow & 20.57 & 150.73 & 137.14 & 81.52 & 24.51 & 21.71 \\
FireFlow + Ours     & \textbf{20.67} & \textbf{148.63} & \textbf{135.43} & \textbf{81.76} & \textbf{24.58} & \textbf{21.83} \\
\bottomrule
\end{tabular}\label{sd4-recon}
\end{table}

We also applied our proximal correction to Stable Diffusion 3.5. As shown in Tab.~\ref{sd4-recon} and Tab.~\ref{sd3_edit}, our method can be effectively applied to other RF-based model.

\subsection{Reconstruction Error} \label{sec:recon-error}
\begin{figure*}[th]
    \centering
    \includegraphics[width=1\linewidth]{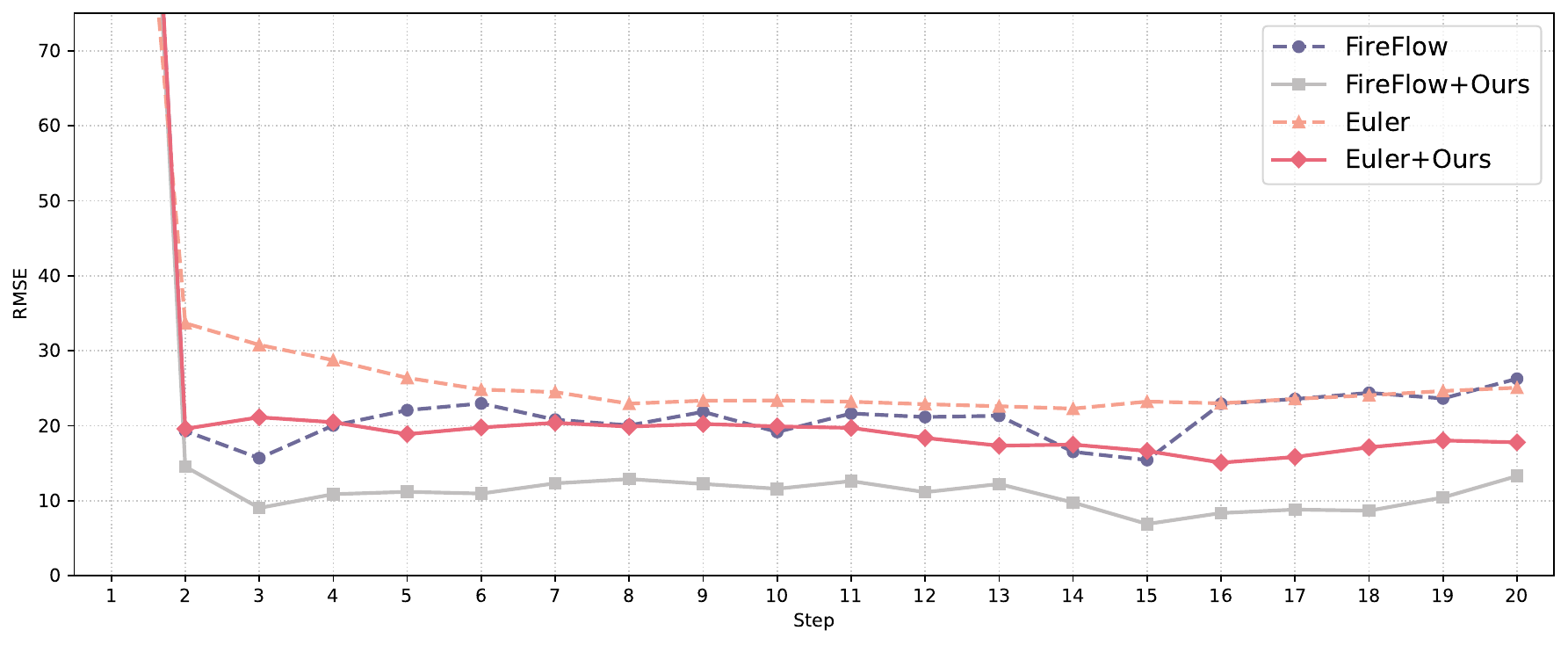}
    \caption{Reconstruction error (RMSE) across different sampling steps. Our method consistently reduces error compared to FireFlow and Euler.}
    \label{fig:rec-error}
\end{figure*}
Fig.~\ref{fig:rec-error} compares reconstruction errors under different solvers. FireFlow shows relatively stable errors, while FireFlow~+~Ours achieves consistently lower values across most steps. Euler suffers from error accumulation, especially at small step counts, whereas Euler~+~Ours effectively suppresses this growth and maintains lower reconstruction error throughout. 

\section{Experimental Settings} \label{C}
\renewcommand{\thesubsection}{C.\arabic{subsection}}
\subsection{Computing Infrastructure.}\label{C1}
The experiments in this paper were conducted using the following computing infrastructure:
\begin{itemize}
    \item \textbf{Hardware:}
    \begin{itemize}
        \item \textbf{GPU}: A single NVIDIA A100-PCIE-40GB with 40GB memory.
        \item \textbf{CPU}: 10 vCPUs, Intel Xeon Processor (Skylake, IBRS).
        \item \textbf{Memory}: 72GB RAM.
    \end{itemize}
    
    \item \textbf{Software:}
    \begin{itemize}
        \item \textbf{Operating System}: Ubuntu 22.04.
        \item \textbf{Python}: Python 3.12.
        \item \textbf{PyTorch}: PyTorch 2.5.1.
        \item \textbf{CUDA}: CUDA 12.4.
    \end{itemize}
\end{itemize}

\subsection{Inversion and Reconstruction}
We follow the official open-source implementation of \texttt{FireFlow}. 
For Euler, Heun, RF-Solver and FireFlow, the editing configuration see Tab.~\ref{tab:solver-config2}. 
\begin{table*}[h]
\centering
\caption{Hyperparameter setting in Reconstruction task.}
\resizebox{\linewidth}{!}{
\begin{tabular}{l|c|c|c|c}
\toprule
\textbf{Argument / Solver} 
& \textbf{Euler} 
& \textbf{Heun} 
& \textbf{RF-Solver} 
& \textbf{FireFlow} \\
\midrule
\texttt{--guidance}            & 2 & 2 & 2 & 2 \\
\texttt{--start\_layer\_index} & 20 & 20 & 20 & 20 \\
\texttt{--end\_layer\_index}   & 37 & 37 & 37 & 37 \\
\texttt{--inject}              & 1 & 1 & 1 & 1 \\
\texttt{--reuse\_v}            & 1 & 1 & 1 & 1 \\
\texttt{--editing\_strategy}   & \texttt{replace\_v} 
                               & \texttt{replace\_v} 
                               & \texttt{replace\_v} 
                               & \texttt{replace\_v} \\
\texttt{--qkv\_ratio}          & \texttt{1.0,1.0,1.0}
                               & \texttt{1.0,1.0,1.0}
                               & \texttt{1.0,1.0,1.0}
                               & \texttt{1.0,1.0,1.0} \\
\bottomrule
\end{tabular}}
\label{tab:solver-config2}
\end{table*}

\paragraph{Quantitative comparison.}
For Tab.~\ref{tab:inv_recon}, all results are obtained using Flux.1. 
We use $30$ Euler steps for the baseline, and $12$ steps for Heun, RF-Solver, and FireFlow.

\paragraph{Qualitative comparison.}
For Fig.~\ref{fig:recon}, we set the same number of function evaluations (NFE) for both conditional and unconditional reconstructions. 
All methods, with and without our velocity correction, are evaluated using $\mathbf{step}=8$.

\subsection{Editing}
We again follow the official FireFlow implementation. 
For Euler, Heun, RF-Solver and FireFlow, the editing configuration see Tab.~\ref{tab:config-solver}.

\begin{table}[h]
\centering
\caption{Hyperparameter setting in Editing task.}
\resizebox{\linewidth}{!}{
\begin{tabular}{l|c|c|c|c}
\toprule
\textbf{Argument / Method} & \textbf{Euler} & \textbf{Heun} & \textbf{RF-Solver} & \textbf{FireFlow} \\
\midrule
\texttt{--guidance}          & 2 & 2 & 2 & 2 \\
\texttt{--start\_layer\_index} & 0 & 0 & 0 & 0 \\
\texttt{--end\_layer\_index}   & 37 & 37 & 37 & 37 \\
\texttt{--inject}            & 2 & 2 & 2 & 2 \\
\texttt{--reuse\_v}          & 1 & 1 & 1 & 1 \\
\texttt{--editing\_strategy} & \texttt{replace\_v} & \texttt{replace\_v} & \texttt{replace\_v} & \texttt{replace\_v} \\
\texttt{--qkv\_ratio}        & \texttt{1.0,1.0,1.0} & \texttt{1.0,1.0,1.0} & \texttt{1.0,1.0,1.0} & \texttt{1.0,1.0,1.0} \\
\bottomrule
\end{tabular}}
\label{tab:config-solver}
\end{table}

\paragraph{Qualitative comparison.}
For Fig.~\ref{fig:edit}, RF-Inversion is evaluated with $\mathbf{step}=26$ (Ours) and $\mathbf{step}=28$ (Vanilla).  
RF-Solver uses $\mathbf{step}=12$ (Ours) and $\mathbf{step}=15$ (Vanilla).  
FireFlow uses $\mathbf{step}=10$ (Ours) and $\mathbf{step}=12$ (Vanilla).

\subsection{Ablation Studies.}
\noindent\textbf{Ablation study of $w$.}
For ablation study of $w$, FireFlow is chosen as the baseline with $\text{step}=12$ for all experiments in Section~\ref{sec:abl}. 

\noindent\textbf{Ablation study on the choice of norm.}
For ablation study the choice of norm., FireFlow is chosen as the baseline with $\text{step}=10$ for all experiments in Section~\ref{sec:abl}. 

\begin{minipage}[t]{0.48\linewidth}
\begin{algorithm}[H]
\caption{\ours: RF-Solver Case}
\label{alg:PMI-RFS}
\textbf{Input}: velocity $v_{\bm\theta}$, image $\mathbf{z}_0$, \\
\makebox[3em][l]{}schedule $\mathcal{T}=\{t_0,\dots,t_N\}$.\\
\textbf{Output}: $\hat{\mathbf{z}}_1$
\begin{algorithmic}[1]
  \STATE $\mathbf{x}_{\mathsf{dist}}=\mathbf{0}$
  \FOR{$i=0$ to $N-1$}
    \STATE $\Delta t=t_{i+1}-t_i$
    \STATE $\mathbf{v}_{t_i}=v_{\bm\theta}(\mathbf{z}_{t_i},t_i)$
    \STATE $\mathbf{z}^{\mathsf{mid}}_{t_{i}}=\mathbf{z}_{t_i}+\frac{1}{2}\Delta t\,\mathbf{v}_{t_i}$
    \STATE $\mathbf{v}_{t_{i}}^{\mathsf{RFS}}=v_{\bm\theta}(\mathbf{z}^{\mathsf{mid}}_{t_{i}},\frac{1}{2}(t_{i+1}+t_i))$
    \STATE $\mathbf{x}_{\mathsf{dist}}+=\Delta t\,\mathbf{v}^{\mathsf{RFS}}_{t_i}$
    \STATE $\bar{\mathbf{v}}_{t_i}=\frac{1}{t_{i+1}-t_0}\mathbf{x}_{\mathsf{dist}}$
    \STATE $\hat{\mathbf{v}}_{t_i}=\mathbf{v}^{\mathsf{RFS}}_{t_i}-r_{t_i}\dfrac{\nabla F(\mathbf{v}^{\mathsf{RFS}}_{t_i})}{\|\nabla F(\mathbf{v}^{\mathsf{RFS}}_{t_i})\|_2}$
    \STATE $\mathbf{a}_{t_i}=\dfrac{\hat{\mathbf{v}}_{t_i}-\mathbf{v}_{t_i}}{\Delta t/2}$
    \STATE $\mathbf{z}_{t_{i+1}}=\mathbf{z}_{t_i}+\Delta t\,\hat{\mathbf{v}}_{t_i}
           +\tfrac{1}{2}\Delta t^{2}\,\mathbf{a}_{t_i}$

  \ENDFOR
  \STATE \textbf{return} $\mathbf{z}_1$
\end{algorithmic}
\end{algorithm}
\end{minipage}
\hfill
\begin{minipage}[t]{0.48\linewidth}
\begin{algorithm}[H]
\caption{\ourEdit: RF-Solver Case}
\label{alg:mimic-RFS}
\textbf{Input}: velocity $v_{\bm\theta}$, image $\mathbf{z}_0$,\\
\makebox[3em][l]{}schedule $\mathcal{T}=\{t_0,\dots,t_N\}$, weight $w$.\\
\textbf{Output}: $\hat{\mathbf{z}}_0$
\begin{algorithmic}[1]
  \STATE $\hat{\mathbf{z}}_1=\ours(v_{\bm\theta},\mathbf{z}_0,t)$
  \STATE $\mathbf{x}_{\mathsf{dist}}=\mathbf{0}$
  \FOR{$i=N$ to $1$}
    \STATE $\Delta t=t_{i-1}-t_i$
    \STATE $\mathbf{v}_{t_i}=v_{\bm\theta}(\mathbf{\hat{z}}_{t_i},t_i)$
    \STATE $\mathbf{z}^{\mathsf{mid}}_{t_{i}}=\mathbf{\hat{z}}_{t_i}+\frac{1}{2}\Delta t\,\mathbf{v}_{t_i}$
    \STATE $\mathbf{v}_{t_{i}}^{\mathsf{RFS}}=v_{\bm\theta}(\mathbf{z}^{\mathsf{mid}}_{t_{i}},\frac{1}{2}(t_{i+1}+t_i))$
    \STATE $\mathbf{x}_{\mathsf{dist}}+=\Delta t\,\mathbf{v}^{\mathsf{RFS}}_{t_i}$
    \STATE $\bar{\mathbf{v}}_{t_i}=\frac{1}{t_{i-1}-t_N}\mathbf{x}_{\mathsf{dist}}$
    \STATE $\bar{\mathbf{v}}_{t_i}^{\mathsf{proj}}=\frac{\mathbf{v}^{\mathsf{RFS}}_{t_i}{}^\top \bar{\mathbf{v}}_{t_i}^{\mathsf{edit}}}{\|\bar{\mathbf{v}}_{t_i}^{\mathsf{edit}}\|_2^2}\bar{\mathbf{v}}_{t_i}^{\mathsf{edit}}$
    \STATE $\hat{\mathbf{v}}_{t_i}=(1-w)\bar{\mathbf{v}}_{t_i}^{\mathsf{proj}}+w\,\mathbf{v}^{\mathsf{RFS}}_{t_i}$
    \STATE $\mathbf{a}_{t_i}=\dfrac{\hat{\mathbf{v}}_{t_i}-\mathbf{v}_{t_i}}{\Delta t/2}$
    \STATE $\mathbf{z}_{t_{i-1}}=\mathbf{z}_{t_i}+\Delta t\,\hat{\mathbf{v}}_{t_i}
           +\tfrac{1}{2}\Delta t^{2}\,\mathbf{a}_{t_i}$
  \ENDFOR
  \STATE \textbf{return} $\hat{\mathbf{z}}_0$
\end{algorithmic}
\end{algorithm}
\end{minipage}

\begin{minipage}[t]{0.48\linewidth}
\begin{algorithm}[H]
\caption{\ours: Heun Case}
\label{alg:PMI-heun}
\textbf{Input}: velocity $v_{\bm\theta}$, image $\mathbf{z}_0$, \\
\makebox[3em][l]{}schedule $\mathcal{T}=\{t_0,\dots,t_N\}$.\\
\textbf{Output}: $\hat{\mathbf{z}}_1$
\begin{algorithmic}[1]
  \STATE $\mathbf{x}_{\mathsf{dist}}=\mathbf{0}$
  \FOR{$i=0$ to $N-1$}
    \STATE $\Delta t=t_{i+1}-t_i$
    \STATE $\mathbf{v}_{t_i}=v_{\bm\theta}(\mathbf{z}_{t_i},t_i)$
    \STATE $\mathbf{z}^{\mathsf{mid}}_{t_{i}}=\mathbf{z}_{t_i}+\frac{1}{2}\Delta t\,\mathbf{v}_{t_i}$
    \STATE $\mathbf{v}_{t_{i}}^{\mathsf{heun}}=v_{\bm\theta}(\mathbf{z}^{\mathsf{mid}}_{t_{i}},\frac{1}{2}(t_{i+1}+t_i))$
    \STATE $\mathbf{x}_{\mathsf{dist}}+=\Delta t\,\mathbf{v}^{\mathsf{heun}}_{t_i}$
    \STATE $\bar{\mathbf{v}}_{t_i}=\frac{1}{t_{i+1}-t_0}\mathbf{x}_{\mathsf{dist}}$
    \STATE $\hat{\mathbf{v}}_{t_i}=\mathbf{v}^{\mathsf{heun}}_{t_i}-r_{t_i}\dfrac{\nabla F(\mathbf{v}^{\mathsf{heun}}_{t_i})}{\|\nabla F(\mathbf{v}^{\mathsf{heun}}_{t_i})\|_2}$
    \STATE $\mathbf{z}_{t_{i+1}}=\mathbf{z}_{t_i}+\Delta t\,\hat{\mathbf{v}}_{t_i}$
  \ENDFOR
  \STATE \textbf{return} $\mathbf{z}_1$
\end{algorithmic}
\end{algorithm}
\end{minipage}
\hfill
\begin{minipage}[t]{0.48\linewidth}
\begin{algorithm}[H]
\caption{\ourEdit: Heun Case}
\label{alg:mimic-heun}
\textbf{Input}: velocity $v_{\bm\theta}$, image $\mathbf{z}_0$,\\
\makebox[3em][l]{}schedule $\mathcal{T}=\{t_0,\dots,t_N\}$, weight $w$.\\
\textbf{Output}: $\hat{\mathbf{z}}_0$
\begin{algorithmic}[1]
  \STATE $\hat{\mathbf{z}}_1=\ours(v_{\bm\theta},\mathbf{z}_0,t)$
  \STATE $\mathbf{x}_{\mathsf{dist}}=\mathbf{0}$
  \FOR{$i=N$ to $1$}
    \STATE $\Delta t=t_{i-1}-t_i$
    \STATE $\mathbf{v}_{t_i}=v_{\bm\theta}(\mathbf{\hat{z}}_{t_i},t_i)$
    \STATE $\mathbf{z}^{\mathsf{mid}}_{t_{i}}=\mathbf{\hat{z}}_{t_i}+\frac{1}{2}\Delta t\,\mathbf{v}_{t_i}$
    \STATE $\mathbf{v}_{t_{i}}^{\mathsf{heun}}=v_{\bm\theta}(\mathbf{z}^{\mathsf{mid}}_{t_{i}},\frac{1}{2}(t_{i+1}+t_i))$
    \STATE $\mathbf{x}_{\mathsf{dist}}+=\Delta t\,\mathbf{v}^{\mathsf{heun}}_{t_i}$
    \STATE $\bar{\mathbf{v}}_{t_i}=\frac{1}{t_{i-1}-t_N}\mathbf{x}_{\mathsf{dist}}$
    \STATE $\bar{\mathbf{v}}_{t_i}^{\mathsf{proj}}=\frac{\mathbf{v}^{\mathsf{heun}}_{t_i}{}^\top \bar{\mathbf{v}}_{t_i}^{\mathsf{edit}}}{\|\bar{\mathbf{v}}_{t_i}^{\mathsf{edit}}\|_2^2}\bar{\mathbf{v}}_{t_i}^{\mathsf{edit}}$
    \STATE $\hat{\mathbf{v}}_{t_i}=(1-w)\bar{\mathbf{v}}_{t_i}^{\mathsf{proj}}+w\,\mathbf{v}^{\mathsf{heun}}_{t_i}$
    \STATE $\hat{\mathbf{z}}_{t_{i-1}}=\hat{\mathbf{z}}_{t_i}+\Delta t\,\hat{\mathbf{v}}_{t_i}$
  \ENDFOR
  \STATE \textbf{return} $\hat{\mathbf{z}}_0$
\end{algorithmic}
\end{algorithm}
\end{minipage}

\subsection{High-Order Algorithms}
For better reproducibility, we additionally provide the full implementations of 
\ours\ and \ourEdit\ under several commonly used ODE solvers beyond the Euler case.
These variants share the same correction structure while differing in the underlying
base velocity estimation. Specifically, Algorithm~\ref{alg:PMI-heun} and
Algorithm~\ref{alg:mimic-heun} present the Heun implementation, 
Algorithm~\ref{alg:PMI-RFS} and Algorithm~\ref{alg:mimic-RFS} correspond to the 
RF-Solver version, 
and Algorithm~\ref{alg:PMI:ff} together with 
Algorithm~\ref{alg:mimic-CFG:ff} show the FireFlow integration. 
We hope these complete solver-specific implementations facilitate further research and 
replication of our results.

\begin{minipage}[t]{0.48\linewidth}
\begin{algorithm}[H]   %
\caption{\ours: FireFlow Case}
\label{alg:PMI:ff}
\textbf{Input}: velocity function ${v}_{\bm{\theta}}(\cdot)$, image $\mathbf{z}_0$, time \\
\makebox[3em][l]{}schedule $\mathcal{T}=\{t_0=0,\dots,t_N=1$\}.\\
\textbf{Output}: $\hat{\mathbf{z}}_1$
\begin{algorithmic}[1]
  \STATE $\mathbf{x}_\mathsf{dist}=\mathbf{0}$
  \FOR{$i=0$ to $N-1$}
    \IF{i=0}
        \STATE $\hat{\mathbf{v}}^{\mathsf{mid}}_{t_i}=v_{\bm\theta}(\mathbf{z}_{t_i},t_i)$
    \ELSE
        \STATE $\hat{\mathbf{v}}^{\mathsf{mid}}_{t_i}=\mathbf{v}^{\mathsf{previous}}$
    \ENDIF
    \STATE $\mathbf{z}^{\mathsf{mid}}_{t_{i}}=\mathbf{z}_{t_i}+\frac{1}{2}\Delta t\,\hat{\mathbf{v}}^{\mathsf{mid}}_{t_i}$
    \STATE $\mathbf{v}_{t_{i}}=v_{\bm\theta}(\mathbf{z}^{\mathsf{mid}}_{t_{i}},\frac{1}{2}(t_{i+1}+t_i))$
    \STATE $\mathbf{x}_{\mathsf{dist}}+=(t_{i+1}-t_i)\mathbf{v}_{t_i}$
    \STATE $\bar{\mathbf{v}}_{t_i}=\dfrac{1}{t_{i+1}-t_0}\mathbf{x}_{\mathsf{dist}}$
    \STATE $\hat{\mathbf{v}}_{t_i}=\mathbf{v}_{t_i}-r_{t_i}\dfrac{\nabla F(\mathbf{v}_{t_i})}{\|\nabla F(\mathbf{v}_{t_i})\|_2}$
    \STATE $\mathbf{v}^{\mathsf{previous}}=\hat{\mathbf{v}}_{t_i}$
    \STATE $\hat{\mathbf{z}}_{t_{i+1}}=\hat{\mathbf{z}}_{t_i}+(t_{i+1}-t_{i})\hat{\mathbf{v}}_{t_i}$
  \ENDFOR
  \STATE \textbf{return} $\hat{\mathbf{z}}_1$
\end{algorithmic}
\end{algorithm}
\end{minipage}
\hfill
\begin{minipage}[t]{0.48\linewidth}
\begin{algorithm}[H]
\caption{\ourEdit~: FireFlow Case}
\label{alg:mimic-CFG:ff}
\textbf{Input}: velocity function $v_{\bm\theta}(\cdot)$, image $\mathbf{z}_0$, time\\
\makebox[3em][l]{}schedule $\mathcal{T}=\{t_0=0,\dots,t_N=1$\},\\
\makebox[3em][l]{}interpolation weight $w$.\\
\textbf{Output}: $\mathbf{\hat{z}}_0$
\begin{algorithmic}[1]
  \STATE $\mathbf{\hat{z}}_1=\ours(v_{\bm\theta},\mathbf{z}_0,t)$
  \STATE $\mathbf{x}_{\mathsf{dist}}=\mathbf{0}$
  \FOR{$i=N$ to $1$}
    \IF{i=N}
        \STATE $\hat{\mathbf{v}}^{\mathsf{mid}}_{t_i}=v_{\bm\theta}(\mathbf{z}_{t_i},t_i)$
    \ELSE
        \STATE $\hat{\mathbf{v}}^{\mathsf{mid}}_{t_i}=\mathbf{v}^{\mathsf{previous}}$
    \ENDIF
    \STATE $\mathbf{z}^{\mathsf{mid}}_{t_{i}}=\mathbf{z}_{t_i}+\frac{1}{2}\Delta t\,\hat{\mathbf{v}}^{\mathsf{mid}}_{t_i}$
    \STATE $\mathbf{v}_{t_{i}}=v_{\bm\theta}(\mathbf{z}^{\mathsf{mid}}_{t_{i}},\frac{1}{2}(t_{i-1}+t_i))$
    \STATE $\mathbf{x}_{\mathsf{dist}}+=(t_{i-1}-t_{i})\mathbf{v}_{t_{i}}$
    \STATE $\mathbf{\bar{v}}_{t_i}^{\mathsf{edit}}=\dfrac{1}{t_{i-1}-t_N}\mathbf{x}_{\mathsf{dist}}$
    \STATE $\mathbf{\bar{v}}_{t_i}^{\mathsf{proj}}=\dfrac{\mathbf{v}_{t_i}^{\top}\mathbf{\bar{v}}_{t_i}^{\mathsf{edit}}}{\|\mathbf{\bar{v}}_{t_i}^{\mathsf{edit}}\|_2^2}\,\mathbf{\bar{v}}_{t_i}^{\mathsf{edit}}$
    \STATE $\mathbf{\hat{v}}_{t_i}=(1-w)\mathbf{\bar{v}}_{t_i}^{\mathsf{proj}}+w\,\mathbf{v}_{t_i}$
    \STATE $\mathbf{v}^{\mathsf{previous}}=\hat{\mathbf{v}}_{t_i}$
    \STATE $\mathbf{\hat{z}}_{t_{i-1}}=\mathbf{\hat{z}}_{t_i}+(t_{i-1}-t_{i})\mathbf{\hat{v}}_{t_i}$
  \ENDFOR
  \STATE \textbf{return} $\mathbf{\hat{z}}_0$
\end{algorithmic}
\end{algorithm}
\end{minipage}

\section{Additional Qualitative Comparison} \label{D}
\renewcommand{\thesubsection}{D.\arabic{subsection}}
\subsection{Qualitative Comparison for Editing}
\setcounter{figure}{0} 
\setcounter{table}{0} 
\renewcommand{\thefigure}{D.\arabic{figure}}

\begin{figure*}[th]
    \centering
    \includegraphics[width=1\linewidth]{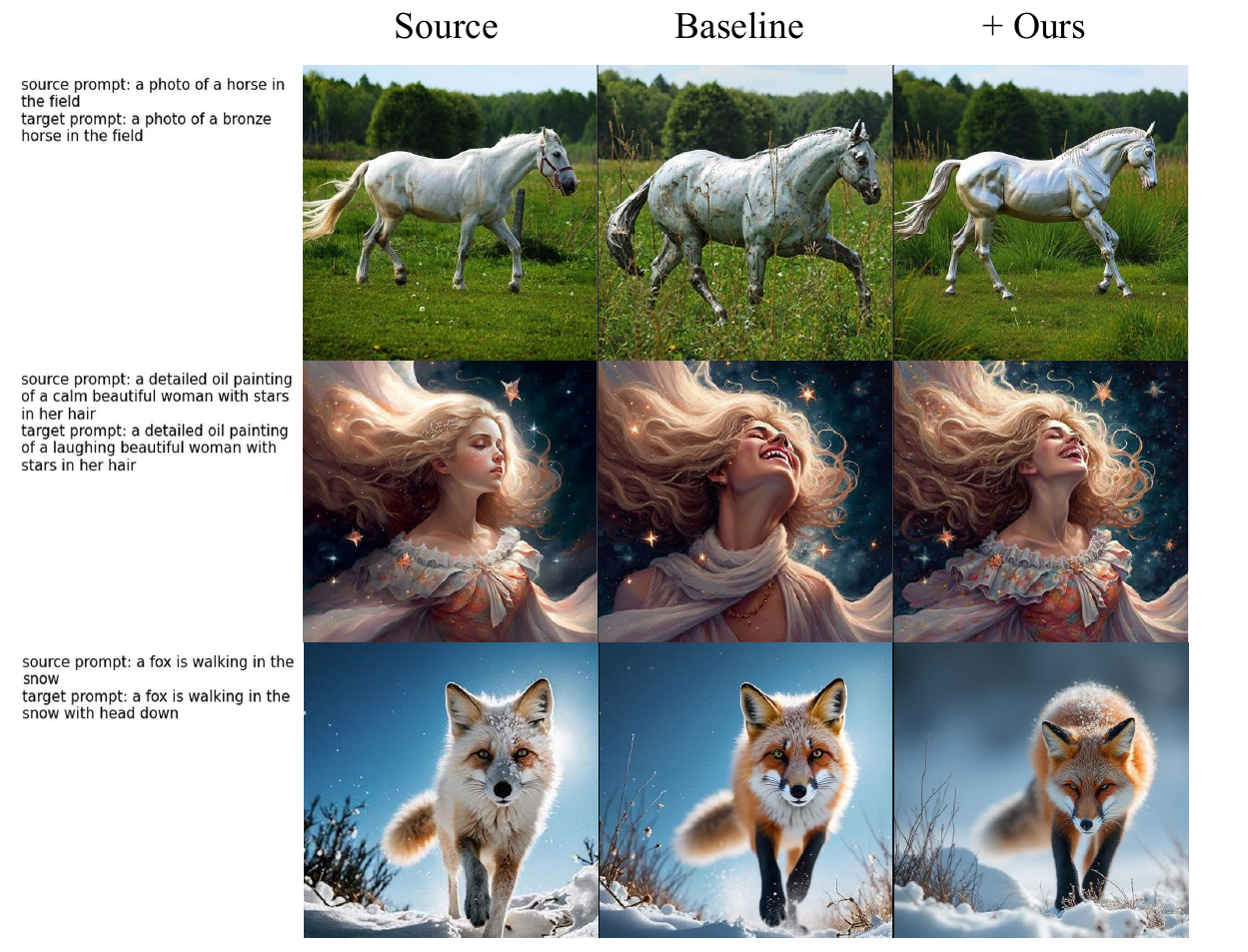}
    \caption{Qualitative comparison of editing results on PIE-Bench. We considered  more editing modalities in PIE-Bench, including Change Content, Change Pose, and Change Material. Our method help baseline achieve better results.}
    \label{fig:cmp-edit3}
\end{figure*}

 We also demonstrate more editing modalities in PIE-Bench, as shown in Fig.~\ref{fig:cmp-edit3}, our method helps baselines achieve better result on Change Content, Change Pose, and Change Material. The row in Fig.~\ref{fig:cmp-edit3} is generated from RF-Solver, second row from FireFlow, and the third row from Euler.

\section{The Use of Large Language Models (LLMs)} \label{E}
\renewcommand{\thesubsection}{E.\arabic{subsection}}

In accordance with the ICLR guidelines, we report the usage of large language models (LLMs) in the preparation of this paper. 
LLMs were employed solely as a writing assistance tool for language polishing, grammar correction, and improving readability. 
They were not used for research ideation, experimental design, data analysis, or the generation of scientific content. 
All technical ideas, theoretical developments, proofs, and experimental results presented in this paper are the work of the authors. 
The authors take full responsibility for the final content of the submission.

\end{document}